\DeclareMathAlphabet\mathbb{U}{msb}{m}{n}
\def\Rset{\mathbb{R}}
\DeclareMathOperator*{\conv}{conv}
\DeclareMathOperator*{\cone}{cone}
\DeclareMathOperator*{\vecspan}{span}
\DeclareMathOperator{\Reg}{\mathsf{Reg}}
\DeclareMathOperator{\WReg}{\mathsf{WReg}}
\DeclareMathOperator{\Rate}{\mathsf{Rate}}
\DeclareMathOperator{\AppLoss}{\mathsf{AppLoss}}
\DeclareMathOperator{\AppDist}{\mathsf{AppDist}}
\DeclareMathOperator{\Id}{Id}
\newtheorem{theorem}{Theorem}
\newtheorem{lemma}{Lemma}
\newcommand{\Lin}{\mathsf{Lin}}
\newcommand{\Aff}{\mathsf{Aff}}
\newcommand{\cA}{\mathcal{A}}
\newcommand{\cB}{\mathcal{B}}
\newcommand{\cC}{\mathcal{C}}
\newcommand{\cI}{\mathcal{I}}
\newcommand{\cL}{\mathcal{L}}
\newcommand{\cP}{\mathcal{P}}
\newcommand{\cS}{\mathcal{S}}
\newcommand{\cU}{\mathcal{U}}
\newcommand{\cX}{\mathcal{X}}
\newcommand{\cY}{\mathcal{Y}}
\newcommand{\cPhi}{\Phi}
\newcommand{\bu}{{\mathbf u}}
\newcommand{\bp}{{\mathbf p}}
\newcommand{\bell}{{\bm \ell}}
\newcommand{\eps}{\varepsilon}
\newcommand{\ignore}[1]{}
\title{Rate-Preserving Reductions for Blackwell Approachability}
\author{Christoph Dann \thanks{Google Research, {\tt cdann@cdann.net}} 
\and Yishay Mansour \thanks{Tel Aviv University and  Google Research, {\tt mansour.yishay@gmail.com}. Supported in part by funding from the European Research Council (ERC) under the European Union’s Horizon 2020 research and innovation program (grant agreement No.\ 882396), by the Israel Science Foundation, the Yandex Initiative for Machine Learning at Tel Aviv University and a grant from the Tel Aviv University Center for AI and Data Science (TAD).} 
\and Mehryar Mohri \thanks{Google Research and Courant Institute of Mathematical Sciences, New York, {\tt mohri@google.com}} 
\and Jon Schneider \thanks{Google Research, {\tt jschnei@google.com}} 
\and Balasubramanian Sivan \thanks{Google Research, {\tt balusivan@google.com}}}
\begin{document}

\maketitle

\begin{abstract}
Abernethy et al.\ (2011) showed that Blackwell approachability and no-regret learning are equivalent, in the sense that any algorithm that solves a specific Blackwell approachability instance can be converted to a sublinear regret algorithm for a specific no-regret learning instance, and vice versa. In this paper, we study a more fine-grained form of such reductions, and ask when this translation between problems preserves not only a sublinear rate of convergence, but also preserves the optimal rate of convergence. That is, in which cases does it suffice to find the optimal regret bound for a no-regret learning instance in order to find the optimal rate of convergence for a corresponding approachability instance? 

We show that the reduction of Abernethy et al.\ (2011) does not preserve rates: their reduction may reduce a $d$-dimensional approachability instance $\mathcal{I}_1$ with optimal convergence rate $R_1$ to a no-regret learning instance $\mathcal{I}_2$ with optimal regret-per-round of $R_2$, with $R_{2}/R_{1}$ arbitrarily large (in particular, it is possible that $R_1 = 0$ and $R_{2} > 0$). On the other hand, we show that it is possible to tightly reduce any approachability instance to an instance of a generalized form of regret minimization we call \emph{improper $\phi$-regret minimization} (a variant of the $\phi$-regret minimization of Gordon et al.\ (2008) where the transformation functions may map actions outside of the action set). 

Finally, we characterize when linear transformations suffice to reduce improper $\phi$-regret minimization problems to standard classes of regret minimization problems (such as external regret minimization and proper $\phi$-regret minimization) in a rate preserving manner. We prove that some improper $\phi$-regret minimization instances cannot be reduced to either subclass of instance in this way, suggesting that approachability can capture some problems that cannot be easily phrased in the standard language of online learning.
\end{abstract}

\section{Introduction}

Blackwell's Approachability Theorem is a fundamental result in game theory with far-reaching applications in machine learning, economics, and optimization. It provides a framework for analyzing repeated games where the payoff is a vector rather than a single scalar value. In essence, the theorem allows players to determine whether a specific set of payoff vectors can be ``approached'' on average over time, even if achieving them individually in a single round is impossible (with more sophisticated variants of this theorem characterizing the rate at which this set can be approached). This concept has been instrumental in developing robust algorithms for online learning, strategic decision-making in economic models, and solving complex optimization problems where the objective is multi-dimensional.

A perhaps even more prevalent paradigm in the area of online learning is that of regret minimization. In the most fundamental form of the problem (external regret minimization), a learner wants to take a sequence of actions that performs at least as well as the optimal static action they could have taken in hindsight. One of the central results of the field of online learning is that there exist learning algorithms which achieve regret that is sublinear in the time horizon $T$. 
Understanding how to optimize these regret bounds is a topic of active research, and optimal regret bounds have been established for a variety of problems in this area.

\citet{AbernethyBartlettHazan2011} showed that these two problems -- Blackwell approachability and regret minimization -- are very closely linked, and in fact are ``equivalent’’ in the following sense: given an instance of Blackwell approachability (a vector-valued payoff function and a set to approach), it is possible to reduce it to an instance of regret minimization (specifically, an instance of online linear optimization) such that any sublinear regret algorithm for the regret minimization instance can be used to solve the corresponding Blackwell approachability instance, giving an algorithmic proof of convergence. Conversely, any regret minimization instance can itself be viewed as an instance of Blackwell approachability, so any algorithmic approach to generic Blackwell approachability instances can be applied to solve regret minimization.

In this paper, we explore whether such an equivalence holds at a more fine-grained level, when we care about the specific convergence rates for Blackwell approachability and the specific regret bounds for regret minimization. In other words, imagine we have a problem that we can cast as a specific instance of Blackwell approachability, and we want to understand the optimal convergence rates possible for this instance (this is a common desideratum for many of the aforementioned applications of approachability). Is it sufficient to just understand the optimal regret bounds for the corresponding instance of regret minimization under the reduction of \cite{AbernethyBartlettHazan2011}?

\subsection{Our results}

We begin by answering this question in the negative: there is no direct correspondence between the optimal rate achievable in an approachability instance $\cI$ and the optimal regret bound achievable in the corresponding regret minimization instance $\cI'$. More precisely, the algorithmic construction of \cite{AbernethyBartlettHazan2011} shows that any upper bound on the regret of $\cI’$ translates to an upper bound on the optimal convergence rate of $\cI$. We prove that this translation is lossy, and that there is no analogous translation between lower bounds. In particular, in Theorem \ref{thm:classic-reduction-not-tight} we exhibit instances of approachability $\cI$ that are \emph{perfectly approachable} (i.e., where the learner can guarantee that the average payoff exactly lies within the set $S$) but where the corresponding regret-minimization instance $\cI'$ has a non-trivial regret lower bound.

On the other hand, we complement this result by showing that Blackwell approachability can be tightly reduced (in a rate-preserving manner) to a novel variant of regret minimization that we call \emph{improper $\phi$-regret minimization} (Theorem \ref{thm:app-to-improper}). To explain what this means, it is helpful to briefly define regret minimization and its relevant variants (we defer formal definitions to Section \ref{sec:model}).

We start with the basic setting of online linear optimization. In this problem, a learner must select an action $p_t$ from a  convex action set $\cP \subseteq \Rset^d$ each round $t$ for a total of $T$ rounds. At the same time, an adversary selects a loss $\ell_t$ from a convex loss set $\cL \subseteq \Rset^d$. The learner receives  loss $\langle p_t, \ell_t\rangle$ during that round, and would like to minimize their total \emph{regret} over all rounds. In its most standard form (external regret), this is just the largest gap between their total loss and the total loss of the best fixed action, and can be written as
$$\Reg(\bp, \bell) = \max_{p^{*} \in \cP} \left(\sum_{t=1}^{T}\langle p_{t}, \ell_{t} \rangle - \sum_{t=1}^{T} \langle p^{*}, \ell_{t}\rangle\right).$$

Some applications call for obtaining low regret not just with respect to the best fixed action in hindsight, but additionally with respect to transformations of the sequence of actions played by the learner. The most well-known such notion of regret is probably that of \emph{swap regret}, which famously has the property that sublinear swap regret learning algorithms converge to correlated equilibria when used to play normal-form games \citep{FosterVohra1997}. However, swap regret is just one of a large class of such regret metrics that are succinctly captured by the notion of (linear) \emph{$\phi$-regret} introduced in \citep{GordonGreenwaldMarks2008}. In $\phi$-regret minimization, we are given a collection $\cPhi$ of linear ``transformation'' functions $\phi$ sending $\cP$ to $\cP$. The $\phi$-regret for this class $\Phi$ is given by:
$$\Reg_{\Phi}(\bp, \bell) = \max_{\phi \in \cPhi} \left(\sum_{t=1}^{T}\langle p_{t}, \ell_{t} \rangle - \sum_{t=1}^{T} \langle \phi(p_t), \ell_{t}\rangle\right).$$

Note that by setting $\cPhi$ to be the set of all constant functions on $\cP$, we immediately recover the original external regret definition. The case of swap regret corresponds to the setting where $\cP$ is the $d$-simplex, and $\cPhi$ contains all row-stochastic linear maps. Interestingly, in \citep{GordonGreenwaldMarks2008}, the authors show that any $\phi$-regret minimization problem has a corresponding sublinear regret learning algorithm (this can also be seen from Blackwell approachability). 

However, the constraint that every linear function $\phi$ in $\cPhi$ maps $\cP$ to itself turns out not to be strictly necessary -- there exist classes $\cPhi$ of linear functions that map points in $\cP$ to arbitrary points in $\Rset^d$ (including possibly outside of $\cP$) for which it is still possible to obtain sublinear $\phi$-regret. We call any such instance an \emph{improper $\phi$-regret minimization} instance (and contrast it with the previous constrained definition by calling those instances \emph{proper $\phi$-regret minimization}). In Theorem~\ref{thm:app-to-improper}, we show that for any Blackwell approachability instance $\cI$, there exists an improper $\phi$-regret minimization instance $\cI'$, such that if you have an algorithm that solves $\cI'$ with at most $R$ $\phi$-regret, you can use it to get a convergence rate of $R/T$ in the approachability instance $\cI$, and vice versa.

The introduction of improper $\phi$-regret raises a natural question: is improper $\phi$-regret minimization truly a more general setting than proper $\phi$-regret minimization? Or can we reduce -- in a rate-preserving manner -- any improper $\phi$-regret minimization problem to a proper $\phi$-regret minimization problem (or even further, to an external regret minimization problem, as the original reduction of \cite{AbernethyBartlettHazan2011} partially accomplishes).

Since arbitrary reductions between instances of regret-minimization can be ill-behaved, we study the above questions in the setting of reductions that are entirely specified by linear transformations, which we call \emph{linear equivalences}. It turns out that there are interesting classes of regret-minimization problems that superficially appear to be improper $\phi$-regret minimization instances, but that can be shown to be linearly equivalent to proper $\phi$-regret minimization or external regret minimization problems. One interesting such class is the class of \emph{weighted regret minimization} problems, where the regret corresponding to the transformation function $\phi$ is weighted by a positive scalar $w_{\phi}$ (this class is discussed in Section \ref{sec:weighted} as a motivating example). 

Nonetheless, we show that these three classes of regret minimization problems -- external regret, proper $\phi$-regret, and improper $\phi$-regret -- are all distinct (with each class strictly contained in the next). Specifically, we prove the following results:

\begin{itemize}
    \item We give a clean mathematical characterization of when an improper $\phi$-regret instance is linearly equivalent to some external regret instance (Theorem \ref{thm:external-char}). This characterization puts rather strong constraints on the set of functions $\cPhi$ (in particular, any two functions in $\cPhi$ must differ by a constant), and rules out the possibility of for example swap regret minimization being linearly equivalent to an external regret minimization instance.
    \item We provide examples of improper $\phi$-regret instances that are provably \emph{not} linearly equivalent to any proper $\phi$-regret instance (Section \ref{sec:counterexamples}). This provides evidence that the language of approachability is strictly more powerful than the language of standard regret minimization. These counterexamples also seem mathematically quite rich, connected to concepts like finding linear subspaces of non-invertible matrices.
    
    \item Finally, we provide an algorithmic characterization of when an improper $\phi$-regret instance is linearly equivalent to a proper $\phi$-regret instance, reducing the problem to checking whenever a certain convex cone of $d$-by-$d$ matrices contains any invertible matrices (Section \ref{sec:algorithmic-characterization}). In the case where the action set $\cP$ and $\cPhi$ are specified as the convex hull of $N$ pure actions and $M$ transformation functions respectively, this check can be performed by a randomized algorithm in time polynomial in $N$, $M$, and $d$ (and also provides an effective algorithm for producing such a reduction). 
\end{itemize}

\begin{figure}
    \centering
    \includegraphics[trim={0 17cm 0 0},clip,width=0.95\textwidth]{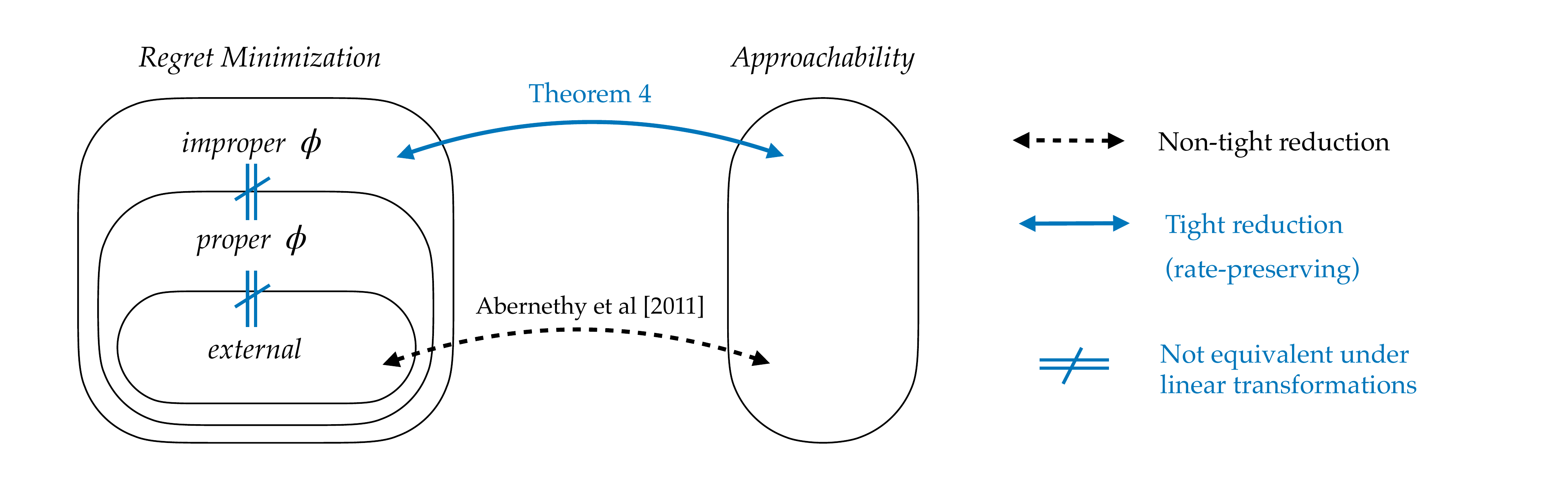}
    \caption{Overview of problem classes and reductions.~\citet{AbernethyBartlettHazan2011} give a non-tight reduction between approachability and external regret minimization. We give in Theorem~\ref{thm:app-to-improper} a tight (rate-preserving) reduction between approachability and the class of improper $\phi$-regret minimization problems. Further, we characterize when an improper $\phi$-regret instance is tightly reducible, under linear equivalence, to more well-studied classes of regret minimization problems, like proper $\phi$-regret minimization (Theorem~\ref{thm:wreg-reduction}) and external regret minimization (Theorem~\ref{thm:external-char}).
    }\label{fig:reduction_overview}
\end{figure}

\begin{figure}
\centering
\begin{tabular}{c|c|c}
  {$\begin{aligned}
    \phi_1(p) &= (1, -p_2, -2p_3) \\
    \phi_2(p) &= (0, 2-p_2, -2p_3) \\
    \phi_3(p) &= (0, -p_2, 3-2p_3)
  \end{aligned}$} &

  {$\begin{aligned}
    \phi_1(p) &= (1, 0, 0) \\
    \phi_2(p) &= (-p_1, 2-p_2, -p_3) \\
    \phi_3(p) &= (-2p_1, -2p_2, 3-2p_3)
  \end{aligned}$} &

  {$\begin{aligned}
    \phi_1(p) &= (p_1-p_2, p_1+p_2, p_3) \\
    \phi_2(p) &= (p_1+p_3, p_2, p_3 - p_1) \\
    \phi_3(p) &= (p_1, p_2 - p_3, p_2 + p_3).
  \end{aligned}$} \\
  (a) & (b) & (c) \\
\end{tabular}
\caption{Examples of improper $\phi$-regret minimization}\label{fig:examples}
\end{figure}

\subsection{Illustrative examples of improper $\phi$-regret and reductions}

To illustrate the idea of linear equivalences between different regret minimization problems, in this section we present three different examples of $\phi$-regret minimization problems. All examples will have action set $\cP = \Delta_3$ (the simplex over $3$ actions) and loss set $\cL = [-1, 1]^d$. The set of transformations $\cPhi$ will always be of the form $\conv(\{\phi_1, \phi_2, \phi_3\})$ but with these specific functions changing between examples as shown in Figure~\ref{fig:examples}.

To begin, note that all these instances are truly \emph{improper} $\phi$-regret problems; it is easy to find a $p = (p_1, p_2, p_3) \in \Delta_3$ for which some $\phi_i(p)$ lies outside $\Delta_3$. Nonetheless, Blackwell approchability can be used to show that you can achieve sublinear $\phi$-regret for all three of these instances.

These three instances have different reducibility properties: instance (a) is linearly equivalent to an external regret problem, instance (b) is linearly equivalent to a proper $\phi$-regret problem (but not to any external regret problem), and instance (c) cannot be reduced to a proper $\phi$-regret problem. 

To give an explicit example of what such a linear reduction entails, for instance (a), it turns out that for any $\ell \in \cL$ and $p \in \cP$ we have the equality $\langle p - \phi_i(p), \ell \rangle = \langle p - e_i, \ell'\rangle$, where $e_i$ is the $i$th basis vector and $\ell' = \mathrm{diag}(1, 1/2, 1/3)\ell$. Therefore, this strange improper $\phi$-regret minimization problem is really an ordinary external regret minimization problem (on the modified loss set $\cL' = \mathrm{diag}(1, 1/2, 1/3)\cL$) in disguise. One can also check that the different $\phi_i(p)$ differ by a constant, as required by our characterization.

Similarly, for instance (b), one can check that although the $\phi_i$ are improper, we can write $\langle p - \phi_i(p), \ell\rangle = \langle p - \phi'_i(p), \ell'\rangle$ for a set of proper $\phi'_i$ and where $\ell' = \ell/3$. On the other hand, there is no way to reduce this instance to an external regret instance, since different $\phi_i$ do not differ by a constant function. Interestingly, for this instance we can also write $\langle p - \phi_i(p), \ell\rangle = w_i \cdot \langle p - e_i, \ell \rangle$ (for $(w_1, w_2, w_3) = (1, 2, 3)$), showing that this instance captures a version of \emph{weighted external regret} (where regret w.r.t action $i$ is multiplied by $w_i$). In particular, this demonstrates the (somewhat surprising fact) that weighted external regret cannot be written exactly as an external regret minimization problem.

Finally, instance (c) is an improper $\phi$-regret minimization problem that provably cannot be reduced to either of the smaller classes. Interestingly the construction of this $\phi$ comes from the fact that the skew-symmetric matrices of odd dimension form a linear subspace of singular matrices (each $p - \phi_i(p)$ is a skew-symmetric linear transformation).

\subsection{Prior work}

\paragraph{Blackwell's approachability, applications and extensions.} There is a large body of work studying, extending, and applying \citeauthor{Blackwell1956}'s approachability theory to various problems of interest, including regret minimization~\citep{FosterVohra1999}, game theory~\citep{HartMasColell2000}, reinforcement learning~\citep{MannorShimkin2003,KalathiBorkarJain2014,MiryoosefiBrantleyDaumeDudikSchapire2019}, calibration~\citep{Dawid82,FosterHart2017}. Approachability and partial monitoring were studied in a series of publications by \citet{Perchet2010,MannorPerchetStoltz2014,
  MannorPerchetStoltz2014Bis,
  PerchetQuincampoix2015,PerchetQuincampoix2018,KwonPerchet2017}. More recently, approachability has also been used in the analysis of fairness in machine learning \citep{ChzhenGiraudStoltz2021}. The notion of approachability has been extended in several studies. These include
\citet{Vieille1992} studying weak approachability in finite dimensional
spaces,~\citet{Spinat2002} providing necessary and sufficient approachability conditions for arbitrary sets (not just convex sets), \citet{Lehrer2003Bis} extending approachability theory to infinite-dimensional spaces.

\paragraph{Improved regret guarantees through approachability for other norms.} As mentioned earlier,~\citet{AbernethyBartlettHazan2011} showed how to use Blackwell's approachability to solve a general class of regret minimization problems. Nevertheless, their reduction could lead to suboptimal regret guarantees: e.g., a $\sqrt{TK}$ regret for an external regret minimization problem (with $T$ steps and $K$ actions), where $\sqrt{T\log(K)}$ is the optimal regret. It has been observed by many papers~\citep{Perchet2015,Shimkin2016,Kwon2021,DannMansourMohriShneiderSivan2023} since then that this is due to the choice of the $\ell_2$ norm for measuring the distance between the average payoff and the target set in approachability. With approachability algorithms for the more suitable $\ell_\infty$ norm, one can recover the optimal regret guarantees for many problems.~\cite{DannMansourMohriShneiderSivan2023} address the time and space complexity of such algorithms from being prohibtibitely high: i.e., from not depending polynomially on the dimension of the space of vector payoffs.

\paragraph{Swap regret and repeated games.} The study of swap-regret and its generalizations has seen renewed interest in recent years due to its interesting connections to learning agents playing repeated games against strategic agents. While there is a large body of work on strategic agents playing against each other in repeated games, and also learning agents playing against each other in repeated games, the outcome of strategic agents playing against learners has remained largely unexplored until recently. The work of \citet*{DSS19} initiate the study of optimizer-learner interactions and show that a learner playing a no-swap-regret learning algorithm in a repeated game will not let an optimizer's reward exceed the Stackelberg value of the game, where the latter itself is always obtainable by an optimizer playing against any no-regret learner. \citet*{MMSS22} show that a learner playing a no-swap-regret algorithm is not just sufficient, but also necessary to ensure that an optimizer gets no more than the Stackelberg value of the game. They further study the class of Bayesian games and give a sufficient condition in the form of no-polytope-swap-regret for the optimizer to not exceed the Bayesian Stackelberg value of the game. This condition of no-polytope-swap-regret was shown to be necessary to cap the optimizer utility at the Bayesian Stackelberg value by \citet{RZ24}. Special classes of these include \citep{BMSW18} studying the specific 2-player Bayesian game of an auction between a single seller and single buyer, \citet*{ADMS18} studying a similar setting but also other buyer behaviors beyond learning, and \citet*{CWWZ23} extending these to a single seller and multiple buyers.

\paragraph{$\Phi$-regret.} Several prominent notions of regret are special instances of $\Phi$-regret \citep{GreenwaldJafari2003}. These include standard external regret, internal regret \citep{FosterVohra1997} (see also \citep{StoltzLugosi2005,StoltzLugosi2007,GreenwaldLiSchudy2008}), and swap regret \citep{BlumMansour2007, peng2023fast, dagan2023external}. The notions of conditional swap regret \citep{MohriYang2014} or transductive regret \citep{MohriYang2017} are also related to $\Phi$-regret. Recently, $\Phi$-regret minimization has found applications in designing learning algorithms for extensive-form games that converge to certain classes of correlated equilibria \citep{celli2021decentralized, bai2022efficient, zhang2024efficient}.

\section{Model and Preliminaries}
\label{sec:model}

Given two convex sets $\cC_1 \subseteq \Rset^{d_1}$ and $\cC_2 \subseteq \Rset^{d_2}$, we define their tensor product $\cC_1 \otimes \cC_2$ to be the subset of $\Rset^{d_1} \otimes \Rset^{d_2} \simeq \Rset^{d_1d_2}$ equal to the convex hull of all vectors of the form $c_1 \otimes c_2 = c_1 c_2^\top$ for $c_1 \in \cC_1$ and $c_2 \in \cC_2$. We write $\Lin(\cC_1, \cC_2)$ to denote the collection of linear maps that send every point in $\cC_1$ to a point in $\cC_2$, and $\Aff(\cC_1, \cC_2)$ to denote the collection of affine maps that send every point in $\cC_1$ to a point in $\cC_2$. Note that if $\cC_1$ belongs to an affine subspace of $\Rset^{d_1}$ (that does not contain the origin), then any affine function on $\cC_1$ can be equivalently written as a linear function and so $\Aff(\cC_1, \cC_2) \simeq \Lin(\cC_1, \cC_2)$. On the other hand, if this is not the case, we can always augment $\cC_1$ (by adding a $(d+1)$st dummy coordinate) and have that $\Aff(\cC_1, \cC_2) \simeq \Lin(\cC'_1, \cC_2)$. 

\subsection{Approachability}

An \textit{approachability problem}\footnote{We will assume here that we are only concerned with approachability problems where the goal is to approach the negative orthant in $\ell_{\infty}$-distance. It is straightforward to show that any approachability problem can be reduced to an equivalent approachability problem of this form (for completeness, we show this in Appendix \ref{app:orthant}).} is defined by three bounded convex subsets of Euclidean space: an action set $\cP$, a loss set $\cL$, and a ``constraint'' set of bi-affine functions $\cU \subset \Aff(\cP \otimes \cL, \Rset)$. Traditionally, $\cU$ is provided in the form of a multidimensional bilinear function $\bu: \cP \times \cL \rightarrow \Rset^{n}$ (which would correspond to the $\cU$ given by the convex hull of the $n$ components of $\bu$), but it is easier and more general to work with $\cU$ directly. For example, this lets us easily consider approachability problems with infinite-dimensional $\bu$.  

For a given time horizon $T$, an \textit{approachability algorithm} $\cA$ is a collection of functions $\{\cA_{t}\}_{1 \leq t \leq T}$, each of which takes as input a prefix of losses $\ell_1, \ell_2, \dots, \ell_{t-1} \in \cL$ and returns an action $\cA_t(\ell_1, \dots, \ell_{t-1}) = p_t$. For a given sequence of $T$ actions $\bp$ and $T$ losses $\bell$, we define the \textit{approachability loss} $\AppLoss(\bp, \bell)$ to be

\begin{equation}\label{eq:app_loss_def}
\AppLoss(\bp, \bell) = \max_{u \in \cU}\sum_{t=1}^{T} u(p_t, \ell_t). \end{equation}

The objective of an approachability problem is to construct approachability algorithms which guarantee low $\AppLoss$. For a given algorithm $\cA$ and time horizon $T$, let $\AppLoss_{T}(\cA)$ be the worst-case approachability loss of $\cA$ on any sequence of $T$ losses, that is, $\AppLoss_{T}(\cA) = \max_{\bell \in \cL^{T}} \AppLoss(\cA(\bell), \bell)$. We will omit the subscript $T$ when it is clear from context.

We say an approachability instance $(\cP, \cL, \cU)$ is \textit{approachable} if for each $\ell \in \cL$, there exists a $p \in \cP$ such that $u(p, \ell) \leq 0$ for all $u \in \cU$. Blackwell's theorem \citep{Blackwell1956} provides the following dichotomy:

\begin{enumerate}
    \item If an approachability instance $(\cP, \cL, \cU)$ is \textbf{not} approachable, then for any algorithm $\cA$, $\AppLoss_{T}(\cA) = \Omega(T)$.
    \item If an approachability instance $(\cP, \cL, \cU)$ is approachable, then there exists an algorithm $\cA$ where $\AppLoss_{T}(\cA) = O(\sqrt{T})$.
\end{enumerate}

For an approachable instance $(\cP, \cL, \cU)$, define

\begin{equation}\label{eq:def_alpha}
    \Rate(\cP, \cL, \cU) = \lim_{T\rightarrow\infty} \inf_{\cA} \frac{\AppLoss_{T}(\cA)}{\sqrt{T}}. 
\end{equation}

In words, the optimal algorithm for the instance $(\cP, \cL, \cU)$ achieves a worst-case approachability loss of $\Rate(\cP, \cL, \cU)\sqrt{T} + o(\sqrt{T})$.

\subsection{Regret minimization}\label{sec:regret-minimization}

Much like approachability problems, we specify a \emph{regret minimization problem} by a triple of bounded convex sets: an action set $\cP \subseteq \Rset^d$, a loss set $\cL \subseteq \Rset^{d}$, and a benchmark set $\cPhi \subseteq \Aff(\cP, \Rset^d)$ of affine functions from $\cP$ to $\Rset^{d}$). Intuitively, each function $\phi \in \cPhi$ represents a ``swap'' or ``transformation'' function that the learner is competing against: if the learner outputs a sequence of actions $p_1, p_2, \dots, p_T$, they want to have low regret compared to the sequence of actions $\phi(p_1), \phi(p_2), \dots, \phi(p_T)$. 

For a given regret minimization instance $(\cP, \cL, \cPhi)$, the regret of a sequence of $T$ actions $\bp$ and $T$ losses $\bell$ is given by
\begin{equation}\label{eq:phi-regret}
\Reg(\bp, \bell) = \max_{\phi \in \Phi}\left(\sum_{t=1}^{T} \langle p_t, \ell_t\rangle - \sum_{t=1}^{T} \langle \phi(p_t), \ell_t \rangle\right).
\end{equation}
As with approachability, our goal is to design a learning algorithm $\cA$ for this problem which achieves low worst-case regret. In fact, it is fairly straightforward to write the above regret minimization problem as a specific instance of approachability: let $\cU$ be the set of bilinear functions of the form

\begin{equation} \label{eq:reg-to-app}
    u_{\phi}(p, \ell) = \langle p, \ell\rangle - \langle \phi(p), \ell\rangle,
\end{equation}

\noindent
where $\phi$ ranges over all $\phi \in \cPhi$, then it is easy to see that the expression for $\Reg(\bp, \bell)$ in \eqref{eq:phi-regret} is equivalent to the expression for $\AppLoss(\bp, \bell)$ in \eqref{eq:app_loss_def}. We will likewise write $\Reg_{T}(\cA)$ to refer to the worst-case regret of algorithm $\cA$, and $\Rate(\cP, \cL, \cPhi)$ to represent the optimal regret rate for this regret minimization instance. 

If each $\phi \in \cPhi$ contains a fixed point in $\cP$, then the corresponding approachability problem is approachable and $\Rate(\cP, \cL, \cPhi) < \infty$. Furthermore, if the loss set $\cL$ contains a vector in every direction ($\cone(\cL) = \Rset^d$), then this condition exactly characterizes regret minimization instances that permit sublinear regret algorithms.

\begin{theorem}\label{thm:reg-to-app}
Consider a regret minimization instance $(\cP, \cL, \cPhi)$. If each $\phi \in \cPhi$, has a fixed point $p_{\phi} \in \cP$ (i.e., $\phi(p_{\phi}) = p_{\phi}$), then $\Rate(\cP, \cL, \cPhi) < \infty$. Conversely, if $\cone(\cL) = \Rset^d$ and $\Rate(\cP, \cL, \cPhi) < \infty$, then every $\phi \in \cPhi$ must have a fixed point in $\cP$.
\end{theorem}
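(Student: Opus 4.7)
The theorem is a direct consequence of the regret-to-approachability reduction via~\eqref{eq:reg-to-app} combined with Blackwell's dichotomy: it suffices to show that the approachability instance $(\cP, \cL, \cU)$ with $\cU = \{u_\phi : \phi \in \cPhi\}$ is approachable (in the sense defined before Blackwell's dichotomy) if and only if every $\phi$ has a fixed point, under the extra condition on $\cone(\cL)$ in the converse direction. Recall that the instance is approachable when for every $\ell \in \cL$ there exists $p \in \cP$ with $\langle p - \phi(p), \ell\rangle \leq 0$ for all $\phi \in \cPhi$.

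For the forward direction, fix $\ell \in \cL$ and consider the function $F(p, \phi) = \langle p - \phi(p), \ell\rangle$. Since every $\phi \in \cPhi$ is affine and $\cPhi$ is itself a convex subset of $\Aff(\cP, \Rset^d)$, the map $\phi \mapsto \phi(p)$ is affine in $\phi$ for each fixed $p$, and clearly affine in $p$ for each fixed $\phi$. Both $\cP$ and $\cPhi$ are bounded convex sets, and (after taking closures) compact, so Sion's minimax theorem gives
\begin{equation*}
\min_{p \in \cP}\, \max_{\phi \in \cPhi} F(p,\phi) \;=\; \max_{\phi \in \cPhi}\, \min_{p \in \cP} F(p,\phi).
\end{equation*}
For each $\phi$ with fixed point $p_\phi$, one has $\min_p F(p,\phi) \leq F(p_\phi,\phi) = 0$, so the right-hand side is $\leq 0$. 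Hence there exists $p \in \cP$ with $\max_\phi \langle p-\phi(p), \ell\rangle \leq 0$, which is the approachability condition. Blackwell's theorem then yields an $O(\sqrt T)$ algorithm, so $\Rate(\cP,\cL,\cPhi) < \infty$.

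For the converse, suppose some $\phi_0 \in \cPhi$ has no fixed point in $\cP$. Then the convex compact set $K = \{p - \phi_0(p) : p \in \cP\}$ does not contain $0$, so by strict separation there exist $v \in \Rset^d$ and $\epsilon > 0$ with $\langle v, p - \phi_0(p)\rangle \geq \epsilon$ for every $p \in \cP$. Using $\cone(\cL) = \Rset^d$, write $v = \lambda \ell$ for some $\ell \in \cL$ and $\lambda > 0$, so that $\langle \ell, p - \phi_0(p)\rangle \geq \epsilon/\lambda =: \delta > 0$ uniformly in $p$. If the adversary plays $\ell_t = \ell$ in every round, then regardless of the algorithm's actions $p_t$ we have $\sum_{t=1}^T u_{\phi_0}(p_t, \ell) \geq \delta T$, and thus $\AppLoss_T(\cA) \geq \delta T$ for every $\cA$. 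Dividing by $\sqrt T$ and taking $T \to \infty$ gives $\Rate(\cP,\cL,\cPhi) = \infty$, contradicting the hypothesis.

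\textbf{Main obstacle.} The only non-trivial step is the minimax exchange in the forward direction, since a priori we are only given one fixed point per $\phi$, not a single $p$ that works against all $\phi$ simultaneously for a given $\ell$. Verifying the hypotheses of Sion's theorem — convexity/compactness of $\cP$ and $\cPhi$ and bi-affineness of $F$ in the parametrization of $\cPhi$ by $(A_\phi, b_\phi)$ where $\phi(p) = A_\phi p + b_\phi$ — is the main thing to check carefully; the converse is a clean separating hyperplane argument that uses $\cone(\cL) = \Rset^d$ precisely to convert the separating direction into an admissible loss vector.
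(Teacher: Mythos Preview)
Your proof is correct. The converse direction is essentially identical to the paper's: separate $0$ from the convex set $\{p-\phi_0(p):p\in\cP\}$, then use $\cone(\cL)=\Rset^d$ to realize the separating direction as a loss and force linear regret.

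The forward direction differs in an interesting way. You verify the approachability condition exactly as the paper defines it (for each $\ell$ there is a $p$ with $u_\phi(p,\ell)\le 0$ for all $\phi$), and since the hypothesis only hands you one fixed point per $\phi$, you invoke Sion's minimax theorem to swap the quantifiers. The paper instead checks the \emph{dual} Blackwell condition: for each $\phi$ there is a $p$ (namely the fixed point $p_\phi$) with $u_\phi(p,\ell)=0$ for all $\ell$. This is immediate from the fixed-point hypothesis and requires no minimax argument; the equivalence of the two conditions is part of Blackwell's theorem itself, so the paper is effectively offloading your minimax step to the cited approachability machinery. Your route is more self-contained relative to the stated definition, at the cost of checking the hypotheses of Sion's theorem; the paper's route is a one-line observation once one accepts the halfspace form of Blackwell's criterion.
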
 
\begin{proof}
To prove the forward direction, we will prove the corresponding approachability problem (defined via \eqref{eq:reg-to-app}) is approachable. For this, we must show that for any $\phi \in \cPhi$, there exists a $p \in \cP$ such that $u_{\phi}(p, \ell) \leq 0$ for all $\ell \in \cL$. But since $u_{\phi}(p, \ell) = \langle p - \phi(p), \ell\rangle$, if we take $p = p_{\phi}$, $p_{\phi} - \phi(p_{\phi}) = 0$ and $u_{\phi}(p, \ell) = 0$ for all $\ell \in \cL$.

Conversely, assume there exists a $\phi \in \cPhi$ without a fixed point. Consider the set $V = \{p - \phi(p) \mid p \in \cP\}$. $V$ is a convex bounded set (since it is a linear transformation of $\cP$ that by assumption does not contain $0$. Therefore, there exists a hyperplane separating $V$ from $0$ -- in particular, there exists a $w$ such that $\langle v, w \rangle > 0$ for any $v \in V$. If we take $\ell$ proportional to $w$ (possible since $\cone(\cL) = \Rset^d$), then for this $\phi$ and $\ell$, there is no $p \in \cP$ such that $u_{\phi}(p, \ell) \leq 0$. It follows that the corresponding approachability problem is not approachable, and therefore $\Rate(\cP, \cL, \cPhi) = \infty$.
\end{proof}

We will categorize regret minimization instances into three classes based on the properties of their set of benchmarks $\cPhi$, which we list in increasing order of generality:

\begin{itemize}
    \item \textbf{External regret minimization}. If each $\phi \in \cPhi$ has the property that $\phi$ is a constant function over $\cP$ (i.e., there exists a $p_{\phi} \in \cP$ s.t. $\phi(p) = p_{\phi}$ for all $p \in \cP$), then we say that this instance of regret minimization is an \emph{external regret minimization} instance. Note that the classic online learning setting of \emph{online linear optimization} fits into this class (the benchmark set $\cPhi$ is the set of all constant functions on $\cP$).
    \item \textbf{Proper $\phi$-regret minimization}. If each $\phi \in \cPhi$ has the property that $\phi(p) \in \cP$ whenever $p \in \cPhi$ (i.e., each $\phi$ maps $\cP$ into itself), then we say that this instance of regret minimization is a \emph{proper $\phi$-regret minimization} instance. Note that by Brouwer's fixed-point theorem, each $\phi$ must contain a fixed point in $\cP$, and therefore each proper $\phi$-regret minimization problem has a sublinear-regret algorithm. 
    
    This class captures the well-known cases of swap regret and internal regret, along with the notion of linear $\phi$-regret studied in \citep{GordonGreenwaldMarks2008}. It also contains the previous class of external regret minimization instances.
    \item \textbf{Improper $\phi$-regret minimization}. As long as each $\phi \in \cPhi$ has a fixed point in $\cP$ (a $p \in \cP$ s.t. $\phi(p) = p$), we say that this instance of regret minimization is an \emph{improper $\phi$-regret instance}. 
    
    Note here that the benchmark functions $\phi$ are allowed to send points in $\cP$ to points outside of $\cP$ (that the learner is not even allowed to play). Nonetheless, by Theorem \ref{thm:reg-to-app}, we know that $\Rate(\cP, \cL, \cPhi) < \infty$ for any such improper $\phi$-regret instance.
\end{itemize}

\subsection{Reductions between learning problems}

 Ideally, we would design efficient learning algorithms that provably match the optimal rate for a given approachability or regret minimization instance. Unfortunately, doing this directly seems very challenging -- even for the special case of online linear optimization, it is unclear how to construct efficient learning algorithms with near-optimal regret bounds. 

Instead, we will settle for being able to reduce an arbitrary approachability instance $(\cP, \cL, \cU)$ to an instance of a simpler learning problem. In particular, all of our simpler learning problems we consider can also be written as approachability problems, so our main goal is to understand when a specific approachability instance $(\cP, \cL, \cU)$ is ``reducible'' to a second specific instance $(\cP', \cL', \cU')$.

We define our notion of reducibility as follows. We say there is a \textit{tight reduction} between instance $\cI = (\cP, \cL, \cU)$ and instance $\cI' = (\cP', \cL', \cU')$ if both: i. given any algorithm $\cA$ for $\cI$, we can construct an algorithm $\cA'$ for $\cI'$ with $\AppLoss(\cA') = \AppLoss(\cA)$, and ii. given any algorithm $\cA'$ for $\cI'$, we can construct an algorithm $\cA$ for $\cI$ with $\AppLoss(\cA) = \AppLoss(\cA')$. We say there is a \emph{$c$-approximate reduction} between the two instances if, for sufficiently large $T$, we instead have $\AppLoss(\cA') \leq c\AppLoss(\cA)$ and $\AppLoss(\cA) \leq c\AppLoss_{T}(\cA')$ in the two constraints respectively. Finally, we say that there is a \emph{weak reduction} \emph{from} instance $\cI$ \emph{to} instance $\cI'$ if we only have one direction of the reduction: given an algorithm $\cA'$ for $\cI'$, we can efficiently construct an algorithm $\cA$ for $\cI$ with $\AppLoss(\cA) \leq \AppLoss(\cA')$. Note that a tight reduction implies that $\Rate(\cI') = \Rate(\cI)$, a $c$-approximate reduction implies that $\Rate(\cI)/c \leq \Rate(\cI') \leq c\Rate(\cI)$, and a weak reduction from $\cI$ to $\cI'$ only implies that $\Rate(\cI) \leq \Rate(\cI')$. 

Since any regret minimization instance can be written as an approachability instance, we can use the same notion of reducibility when talking about reductions among instances of regret minimization. The latter sections of this paper will be concerned with understanding when a specific instance of approachability is equivalent to an instance of a specific class of regret minimization problems. There we will use a more restrictive notion of ``linear equivalence'', where all relevant constructions must be provided by affine transformations; we introduce this in Section \ref{sec:linear}.

\subsection{The classical reduction from approachability to external regret minimization}

Finally, we present the classical reduction from approachability to external regret minimization. This is the same reduction in \citep{AbernethyBartlettHazan2011} (and subsequent works), adapted to our notation.

\begin{theorem}\label{thm:classic-reduction}[~\cite{AbernethyBartlettHazan2011}]
Given any instance $\cI = (\cP, \cL, \cU)$ of approachability, there exists a weak reduction from $\cI$ to an instance $\cI' = (\cP', \cL', \cPhi')$ of external regret minimization.
\end{theorem}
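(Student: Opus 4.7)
The plan is to follow the construction of \citet{AbernethyBartlettHazan2011}: build an OLO instance in which the learner plays over ``directions'' drawn from the constraint set $\cU$, and simulate one round of the approachability game per round of OLO. Concretely, I take $\cP' = \cU$ (viewed as a bounded convex subset of the finite-dimensional space of bi-affine functionals on $\cP \times \cL$), take $\cPhi'$ to be the collection of all constant maps $p \mapsto u^*$ for $u^* \in \cU$ (so that $\Reg_T$ is ordinary external regret with comparator set $\cU$), and let $\cL'$ be a bounded convex set containing every linear functional of the form $u \mapsto -u(p, \ell)$ for $p \in \cP$ and $\ell \in \cL$; this is bounded because $\cP$, $\cL$, $\cU$ are.

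The key tool is a \emph{halfspace oracle}: a map $H : \cU \to \cP$ such that $u(H(u), \ell) \le 0$ for every $u \in \cU$ and every $\ell \in \cL$. Its existence follows from approachability of $\cI$ by a standard minimax argument (Sion's theorem, applied to the bi-affine function $u$ on the convex compact sets $\cP$ and $\cL$), which promotes the paper's ``for each $\ell$, there exists $p$'' definition to the dual ``for each $u$, there exists $p$'' statement that the reduction actually needs.

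Given the oracle, the simulation is immediate. From any algorithm $\cA'$ for $\cI'$, define an approachability algorithm $\cA$ that in round $t$ queries $\cA'$ for an action $u_t \in \cU$, plays $p_t = H(u_t)$, and after observing $\ell_t$ feeds $\cA'$ the linear loss $\ell'_t$ defined by $\langle u, \ell'_t\rangle = -u(p_t, \ell_t)$. The halfspace property gives $\sum_t \langle u_t, \ell'_t\rangle = -\sum_t u_t(p_t, \ell_t) \ge 0$, so
\[
\Reg_T(\cA') = \sum_t \langle u_t, \ell'_t\rangle - \min_{u^* \in \cU} \sum_t \langle u^*, \ell'_t\rangle \,\ge\, 0 - \min_{u^* \in \cU} \sum_t \langle u^*, \ell'_t\rangle \,=\, \max_{u^* \in \cU} \sum_t u^*(p_t, \ell_t) \,=\, \AppLoss_T(\cA),
\]
which is exactly the desired weak reduction $\AppLoss(\cA) \le \Reg(\cA')$.

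The main obstacle is establishing the halfspace oracle, since the paper's primal definition of approachability has the quantifiers swapped relative to what the reduction directly needs; the swap is a routine application of Sion's minimax theorem using convexity and boundedness of $\cP$, $\cL$, $\cU$ and bi-affinity of each $u \in \cU$, but it does deserve explicit justification. A secondary bookkeeping point is to verify that $\cL'$ really is a bounded convex subset of Euclidean space so that $\cI'$ genuinely qualifies as an external regret minimization instance in the sense of Section \ref{sec:regret-minimization}.
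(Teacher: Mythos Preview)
Your proposal is correct and follows essentially the same construction as the paper: set $\cP' = \cU$, take $\cPhi'$ to be the constant maps, feed back the loss $u \mapsto -u(p_t,\ell_t)$, and derive $\AppLoss(\cA) \le \Reg(\cA')$ from the halfspace inequality. The one place you are more careful than the paper is the halfspace oracle: the paper simply asserts in its step~2 that approachability yields, for any fixed $u \in \cU$, a $p$ with $u(p,\ell) \le 0$ for all $\ell$, whereas its definition of approachability has the quantifiers in the order $\forall \ell\, \exists p\, \forall u$; your use of Sion's theorem to justify this swap is the right way to close that gap.
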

\begin{proof}

We will define the target instance $\cI'$ as follows. We will let the new action set $\cP' = \cU$ and the new loss set $\cL' = -(\cP \otimes \cL)$. Note that both $\cP'$ and $\cL'$ are convex subsets of $(\dim \cP)(\dim \cL)$-dimensional Euclidean space, and we can define a bilinear pairing between $\cP'$ and $\cL'$ via $\langle u, -(p \otimes \ell) \rangle = -u(p, \ell)$ for any $u \in \cP'$ and $-(p \otimes \ell) \in \cL'$. Finally, we will let the new benchmark set $\cU'$ consist of all constant functions over $\cP'$ (i.e., for each $x \in \cP'$, there will exist a $u'_{x} \in \cU'$ such that $u'_{x}(p') = x$ for all $p' \in \cP'$). In other words, $\cI'$ is the online linear optimization problem with action set $\cP'$ and loss set $\cL'$ (and is an external regret minimization problem).

Given a low-regret algorithm $\cB$ for solving $\cI'$, we construct the following algorithm $\cA$ for solving $\cI$ using $\cB$ as a black box. In round $t$:

\begin{enumerate}
    \item Set $p'_t = \cB_t(\ell'_1, \ell'_2, \dots, \ell'_{t-1}) \in \cP'$.
    \item Play $p_t \in \cP$ s.t. $\langle p'_t, p_t \otimes \ell \rangle \leq 0$ for all $\ell \in \cL$. Note that such a $p_t$ exists since the instance $(\cP, \cL, \cU)$ is approachable -- in particular, $\langle p'_t, p_t \otimes \ell \rangle = p'_t(p_t, \ell)$ for some $p'_t \in \cU$, and the approachability condition implies that there must exist a $p_t$ where $p'_t(p_t, \ell) \leq 0$ for all $\ell \in \cL$.
    \item Receive the loss $\ell_t$.
    \item Set $\ell'_t = -(p_t \otimes \ell_t) \in \cL'$.
\end{enumerate}
\noindent
By the definition of $\Reg(\bp', \bell')$, we have
\begin{align*}
\Reg(\bp', \bell') 
&= \sum_{t=1}^{T} \langle p'_t, \ell'_t\rangle - \min_{x^* \in \cP'} \sum_{t=1}^{T} \langle x^*, \ell'_t\rangle\\
&= \sum_{t=1}^{T} \langle p'_t, -(p_t \otimes \ell_t)\rangle - \min_{u^* \in \cU} \sum_{t=1}^{T} \langle u^*, -(p_t \otimes \ell_t) \rangle  \\ 
& = \max_{u^* \in \cU} \sum_{t=1}^{T} u^*(p_t, \ell_t) - \sum_{t=1}^{T} p'_t(p_t, \ell_t)\\
& = \AppLoss(\bp, \bell) - \sum_{t=1}^{T} p'_t(p_t, \ell_t) 
\geq  \AppLoss(\bp, \bell).
\end{align*}
This very final inequality follows from the fact that $p'_t(p_t, \ell) \leq 0$ for all $\ell \in \cL$ via our choice of $p_t$ in step 2. Altogether, this analysis implies that $\AppLoss(\cA) \leq \Reg(\cB)$, and therefore provides a weak reduction from $\cI$ to $\cI'$.
\end{proof}

\section{Reducing approachability to (improper) regret minimization}\label{sec:approach-to-regret}

\subsection{The classical reduction is not tight}

Theorem \ref{thm:classic-reduction} proves that the classical reduction of \cite{AbernethyBartlettHazan2011} is a weak reduction from any approachability problem to an external regret minimization problem. It is natural to wonder whether this reduction is in fact tight, or if not, whether the gap in rates between the original approachability problem and the eventual regret problem is small. The following counterexample proves that this is not the case.

\begin{theorem}\label{thm:classic-reduction-not-tight}
    There exists an approachability instance $\cI$ such that $\Rate(\cI) = 0$ but $\Rate(\cI') > 0$, where $\cI'$ is the regret minimization instance obtained by applying the reduction of Theorem \ref{thm:classic-reduction} to $\cI$. In particular, there is no finite $c > 0$ for which the reduction in Theorem \ref{thm:classic-reduction} is a $c$-approximate reduction for all approachability instances, and the reduction is not tight. 
\end{theorem}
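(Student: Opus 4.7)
The plan is to exhibit an explicit small approachability instance that is \emph{perfectly approachable} (a single fixed action simultaneously zeros every $u\in\cU$), yet whose image under the reduction of Theorem~\ref{thm:classic-reduction} is exactly the textbook 1-dimensional online linear optimization problem, for which an $\Omega(\sqrt{T})$ regret lower bound is classical. Concretely, I take $\cP = [0,1]$, $\cL = [-1,1]$, and $\cU = \conv\{u_1,u_2\}$ with the bilinear functions $u_1(p,\ell) = p\ell$ and $u_2(p,\ell) = -p\ell$. Showing $\Rate(\cI)=0$ and $\Rate(\cI')>0$ for the reduced instance $\cI'$ will immediately rule out any finite $c$-approximate guarantee, since no finite $c$ can make $\Rate(\cI')\le c\cdot\Rate(\cI)=0$.

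For the approachability side, the key observation is $u_1(0,\ell)=u_2(0,\ell)=0$ for every $\ell\in\cL$, so the deterministic algorithm $\cA$ that plays $p_t=0$ in every round achieves $\AppLoss(\bp,\bell)=\max_{u\in\cU}\sum_{t}u(0,\ell_t)=0$ on every adversary sequence. Thus $\AppLoss_T(\cA)=0$, which forces $\Rate(\cI)=0$.

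For the regret side, I unfold the reduction explicitly. Parametrizing $\cU$ by $c\in[-1,1]$ via $u_c(p,\ell)=cp\ell$, the set $\cP'=\cU$ becomes (the image of) the interval $[-1,1]$. Because $\cP$ and $\cL$ are both 1-dimensional intervals, the tensor product is just the product set $\cP\otimes\cL=[-1,1]$, so $\cL'=-(\cP\otimes\cL)=[-1,1]$. Under these identifications the bilinear pairing $\langle u_c,-(p\otimes\ell)\rangle=-u_c(p,\ell)=c\cdot(-p\ell)$ is ordinary scalar multiplication, i.e., $\cI'$ is exactly online linear optimization on $[-1,1]$ against losses in $[-1,1]$ with external regret benchmark. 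The classical minimax lower bound for this problem then applies: an adversary drawing $x_t\in\{-1,+1\}$ i.i.d.\ uniformly forces every deterministic learner to have expected regret $\E\bigl[\lvert\sum_t x_t\rvert\bigr]=\Theta(\sqrt{T})$, and averaging produces a deterministic adversary sequence realizing $\Omega(\sqrt{T})$ regret. Therefore $\Rate(\cI')>0$.

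There is no real obstacle in this argument — every verification is elementary and the substance is the construction itself. The conceptual point is that the reduction builds $\cL'$ out of the full width of $\cP$, even though the approachability learner only ever needs the single point $p=0$; this ``wasted'' degree of freedom is precisely what manufactures the spurious $\sqrt{T}$ regret on the $\cI'$ side.
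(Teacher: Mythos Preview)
Your argument is correct. The instance $\cI=(\cP,\cL,\cU)=([0,1],[-1,1],\conv\{p\ell,-p\ell\})$ is perfectly approachable via $p_t\equiv 0$, and your unwinding of the reduction is accurate: identifying $u_c\leftrightarrow c$ and $-(p\otimes\ell)\leftrightarrow -p\ell$ really does turn $\cI'$ into one-dimensional OLO on $[-1,1]\times[-1,1]$, whose $\Omega(\sqrt{T})$ lower bound via i.i.d.\ Rademacher losses is standard. The final step ($\Rate(\cI')>0=c\cdot\Rate(\cI)$ ruling out any $c$-approximate reduction) is immediate from the definitions.

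The paper takes a different route: it works in dimension $d=d'+1$ with $\cP=\Delta_d$, $\cL=[0,1]^d$, and $\cU$ the convex hull of $d'$ ``move-all-mass-to-$i$'' regret functions on the first $d'$ coordinates, leaving coordinate $d'+1$ unconstrained. Perfect approachability again comes from a single safe action ($p_t=e_{d'+1}$), while the reduced instance $\cI'$ is shown to embed the $d'$-expert problem and hence has $\Rate(\cI')\ge\Omega(\sqrt{\log d'})$. Your construction is the minimal version of the same phenomenon (morally $d'=2$ after a change of variables), and is cleaner for the bare statement of the theorem. The paper's higher-dimensional template buys something extra, though: because $\Rate(\cI')$ scales with $\sqrt{\log d'}$ while $\Rate(\cI)$ stays at $0$, a small perturbation of the instance can be used to produce \emph{non-degenerate} examples (where no single action is optimal) with $\Rate(\cI')/\Rate(\cI)$ still arbitrarily large---this is exactly what the paper does in its appendix. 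Your one-dimensional example would need a separate argument for that extension.
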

\begin{proof}
Consider the following approachability instance $\cI$. Fix any $d' \geq 2$ and let $d = d' + 1$. We will let $\cP = \Delta_d$, $\cL = [0, 1]^d$, and $\cU$ to be the convex hull of the $d'$ bilinear functions $u_i$, where for $i \in [d']$,

$$u_{i}(p, \ell) = \sum_{j=1}^{d'} p_{j}(\ell_{j} - \ell_{i}).$$

\noindent
Here the $u_{i}$ constraint for $i \in [d']$ can be thought of as the regret of moving all probability mass on $p_1$ through $p_{d'}$ to $p_i$. Note, however, that the learner also has the option of placing probability mass on action $d'+1$, which has no approachability constraint associated with this. As a result, $\Rate(\cI) = 0$, since the learner can perfectly approach the negative orthant by always playing $p_t = e_{d'+1}$. 

However, the instance of external regret minimization we reduce to will have a worse rate, as it will require solving a genuine regret minimization problem. Let $\cI' = (\cP', \cL', \cPhi')$ be the regret minimization instance formed by applying the reduction of Theorem \ref{thm:classic-reduction}. This instance has $\cP' = \cU$, $\cL' = -(\cP \otimes \cL)$, and $\cU'$ the set of all constant functions on $\cP'$. We will restrict the loss set further, and insist that the only losses $\ell'_t$ are of the form $\ell'_t = -(U_{d'} \otimes \ell_t)$, where $U_{d'} = (1/d', 1/d', \dots, 1/d', 0) \in \cP$ is the uniform distribution over the first $d'$ coordinates, and $\ell_{t}$ is chosen from the subset $\cL_2 \subseteq \cL$ containing all elements of $\cL$ whose last coordinate equals $0$ (so $\cL_2 \cong [0, 1]^{d'}$). Since this restricts the adversary, it only makes the regret minimization problem easier (and the rate smaller).

The regret of a pair of sequences $\bp'$ and $\bell'$ for this new problem can be written as

\begin{equation}\label{eq:reg1}
    \Reg(\bp', \bell') = \max_{x^{*} \in \cP'} \left(\sum_{t=1}^{T} \langle p'_t, \ell'_t \rangle - \sum_{t=1}^{T} \langle x^{*}, \ell'_t \rangle \right).
\end{equation}

To simplify this further, note that the set $\cP' = \cU$ is given by the convex hull of the $d'$ bilinear functions $u_i$, so we can write each element of $\cP'$ uniquely as a convex combination of these $d'$ functions. For a $p' \in \cP'$, we will write $p'_{i}$ to be the coefficient of $u_i$ in this convex combination (in this way, we identify $\cP'$ with the simplex $\Delta_{d'}$).

Now, for any $p' \in \cP'$ and $\ell' \in \cL'$ (of the above restricted form), we can write

\begin{equation}\label{eq:ip1}
\langle p', \ell'\rangle = -\sum_{i=1}^{d'} p'_{i}\left(\frac{1}{d'}\sum_{j=1}^{d'}(\ell_{j} - \ell_{i})\right) = \left\langle p', \pi(\ell)\right\rangle - \frac{1}{d'} \left(\sum_{i=1}^{d'}\pi(\ell)_{i}\right),\end{equation}

\noindent
where $\pi(\ell)$ is the projection of $\ell$ onto the first $d'$ coordinates. Substituting this in turn into \eqref{eq:reg1} yields
\begin{equation}\label{eq:reg2}
    \Reg(\bp', \bell') = \max_{x^{*} \in \cP'} \left(\sum_{t=1}^{T} \langle p'_t, \pi(\ell_t) \rangle - \sum_{t=1}^{T} \left\langle x^*, \pi(\ell_t) \right\rangle \right).
\end{equation}

Now, note that the adversary can choose $\ell_t$ so that $\pi(\ell_t)$ takes on any value in $\cL_2 = [0, 1]^{d'}$. Similarly, $p'$ can take on any value in $\cP_2 = \cP' = \Delta_{d'}$. Therefore, this problem is at least as hard as the online linear optimization problem with action set $\cP_2 = \Delta_{d_2}$ and loss set $\cL_{2} = [0, 1]^{d_2}$. But this is exactly the online learning with experts problem (with $d'$ experts), which has a regret lower bound of $\Omega(\sqrt{T\log d'})$. It follows that $\Rate(\cI') \geq \Omega(\sqrt{\log d'}) > 0$.
\end{proof}

One may object that the approachability instance in the proof of Theorem \ref{thm:classic-reduction-not-tight} is somewhat degenerate, since the approachability instance has a clear optimal action for the learner (which guarantees perfect approachability). In Appendix \ref{app:non-degenerate} we show that it is possible to slightly perturb this example in a way that avoids the existence of such an optimal action, while maintaining an arbitrarily large gap in rates between the two instances.

\subsection{A tight reduction from approachability to improper $\phi$-regret minimization}

In contrast to Theorem \ref{thm:classic-reduction-not-tight}, it turns out that it is possible to tightly reduce approachability to improper $\phi$-regret. The key observation is that if we ``augment'' the action space of the learner as to also include the benchmark $u \in \cU$ they are competing against each round, we can directly interpret $\AppLoss(\bp, \bell)$ as a specific $\phi$-regret on this space.

\begin{theorem}\label{thm:app-to-improper}
For any approachability instance $\cI = (\cP, \cL, \cU)$, there exists a \emph{tight} reduction from $\cI$ to an improper $\phi$-regret minimization instance $\cI' = (\cP', \cL', \cPhi')$.
\end{theorem}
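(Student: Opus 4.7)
The plan is to augment the learner's action space jointly with the benchmark set, setting $\cP' = \cP \otimes \cU$. Using the $\Aff \simeq \Lin$ augmentation from Section~\ref{sec:model}, I would assume each $u \in \cU$ is bilinear (identified with a coefficient matrix) and that there exists a linear functional $\lambda$ on the ambient space of $\cU$ with $\lambda(u) = 1$ for all $u \in \cU$. This functional gives a well-defined linear ``marginal projection'' $\tau : \cP' \to \cP$ satisfying $\tau(p \otimes u) = p$ (extended bilinearly). The loss side is set up by taking $\cL'$ to be a linear embedding of $\cL$ with $\langle p \otimes u, \tilde \ell\rangle = u(p, \ell)$ on simple tensors, extended bilinearly.

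For each $u^* \in \cU$, let $L_{u^*}$ denote the linear map sending $p \otimes u \mapsto p \otimes u^*$, which is well-defined via the $\lambda$-trick since $(p, u) \mapsto \lambda(u) \cdot p \otimes u^*$ is bilinear in $(p, u)$. Taking $\phi_{u^*}(z) = z - L_{u^*}(z)$, the key identity is
\[
\langle z - \phi_{u^*}(z), \tilde \ell\rangle \;=\; \langle L_{u^*}(z), \tilde \ell\rangle \;=\; u^*(\tau(z), \ell),
\]
so that $\Reg = \max_{u^*}\sum_t u^*(\tau(z_t), \ell_t)$. In both directions the tight reduction is then transparent: given an approachability algorithm $\cA$ playing $p_t$, the $\phi$-regret algorithm plays $z_t = p_t \otimes u^{(0)}$ for any fixed $u^{(0)} \in \cU$, so $\tau(z_t) = p_t$ and $\Reg = \AppLoss$; conversely, given a $\phi$-regret algorithm playing $z_t \in \cP'$, the approachability algorithm plays $p_t = \tau(z_t) \in \cP$, again giving $\AppLoss = \Reg$.

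The main obstacle---and the most delicate step---is ensuring that each $\phi_{u^*}$ has a fixed point in $\cP'$, as the improper $\phi$-regret definition requires. Under the naive definition above, a fixed point of $\phi_{u^*}$ requires $\tau(z) = 0$, which is impossible whenever $0 \notin \cP$ (for example, when $\cP$ is a simplex). I would address this by adding an affine correction, replacing $\phi_{u^*}$ by $z \mapsto z - L_{u^*}(z) + p^*_{u^*} \otimes u^*$, where $p^*_{u^*} \in \cP$ is an approachable point for $u^*$; then $p^*_{u^*} \otimes u^* \in \cP'$ becomes a fixed point. The remaining technical work is to verify that this correction does not inflate the $\phi$-regret: using the WLOG reduction to approach-negative-orthant form (Appendix~\ref{app:orthant}) together with a careful embedding of $\cL'$, one shows that the extra pairing $\langle p^*_{u^*} \otimes u^*, \tilde \ell\rangle = u^*(p^*_{u^*}, \ell)$ vanishes on the losses in $\cL'$, so that the tight identity $\Reg = \AppLoss$ is preserved in both directions of the reduction.
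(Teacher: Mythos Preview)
Your construction diverges from the paper's at the choice of $\phi_{u^*}$. You take $\phi_{u^*}=\Id-L_{u^*}$, which yields the clean per-round identity $\langle z-\phi_{u^*}(z),\tilde\ell\rangle=u^*(\tau(z),\ell)$ and hence $\Reg=\AppLoss$ exactly, but at the price of having no fixed point in $\cP'$ (the fixed-point equation becomes $L_{u^*}(z)=0$, i.e.\ $\tau(z)=0$). The paper instead takes $\phi_u$ to be the replacement map $L_u$ \emph{itself}: $\phi_u(u'\otimes p)=u\otimes p$. With that choice every point $u\otimes p$ (for arbitrary $p\in\cP$) is already a fixed point of $\phi_u$ inside $\cP'$, so the obstacle you diagnose never arises and no affine correction is needed.

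The correction you propose does not preserve tightness. With $\phi_{u^*}(z)=z-L_{u^*}(z)+p^*_{u^*}\otimes u^*$, the per-round regret against $u^*$ becomes $u^*(\tau(z),\ell)-u^*(p^*_{u^*},\ell)$. Blackwell's dual condition only guarantees $u^*(p^*_{u^*},\ell)\le 0$ for all $\ell\in\cL$, not equality; the negative-orthant normalization of Appendix~\ref{app:orthant} does nothing to force this to zero, and no linear re-embedding of $\cL$ can alter the scalar $u^*(p^*_{u^*},\ell)$, which is intrinsic to the instance. You therefore get $\Reg\ge\AppLoss$ with strict inequality for generic losses, and the direction of the reduction that requires $\Reg_T(\cA')\le\AppLoss_T(\cA)$ breaks. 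The fix is not to patch $\phi_{u^*}$ after the fact but, as the paper does, to use $L_{u^*}$ directly as the transformation so that fixed points come for free.
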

\begin{proof}
Let $\cX = \cU \otimes \cP$ and $\cY = \cL$. Let $B(x, y): \cX \times \cY \rightarrow \Rset$ be the bilinear form defined via $B(u \otimes p, \ell) = -u(p, \ell)$. Note that we can write 
\begin{align*}
\AppLoss(\bp, \bell) 
& = \max_{u \in \cU}\sum_{t=1}^{T} u(p_t, \ell_t)\\
& = \max_{u \in \cU}\sum_{t=1}^{T} -B(u \otimes p_t, \ell_t) \\
& = \max_{u \in \cU}\left(\sum_{t=1}^{T} B(x_t, \ell_t) - \sum_{t=1}^{T} B(\phi_u(x_t), \ell_t)\right),
\end{align*}

\noindent
where $\phi_u: \cX \rightarrow \cX$ is the linear function defined via $\phi_u(u' \otimes p) = (u' + u) \otimes p$ (for any $u' \in \cU$); note that this is improper as $(u' + u) \otimes p$ is not guaranteed to be within $\cX$. Now, the bilinear form $B$ corresponds to a matrix $M_B$ such that $B(x, y) = \langle x, M_By \rangle$. From this it follows that if we set $\cP' = \cX$, $\cL' = M_B\cY$, and $\cPhi' = \{\phi_{u} \mid u \in \cU\}$, then $\AppLoss(\bp, \bell) = \Reg(\bp', \bell')$ and this is a tight reduction between $\cI$ and $\cI'$. 
\end{proof}

Theorem \ref{thm:app-to-improper} prompts us to question when we can further reduce an improper $\phi$-regret minimization instance to a proper one. We study this question through the lens of linear equivalence in the next section.

\section{Tight linear reductions for regret minimization}\label{sec:linear}

\subsection{Warm-up: weighted regret minimization}\label{sec:weighted}

Theorem \ref{thm:app-to-improper} implies that approachability is equivalent to improper $\phi$-regret minimization. Ideally, we would be able to, in turn, tightly reduce this instance of improper $\phi$-regret minimization to an instance of the better studied problem of external regret minimization (as the classical reduction attempts to do), or at least to an instance of proper $\phi$-regret minimization.

In this section, we examine the question of ``which instances of regret minimization are tightly reducible to each other?'', and specifically ``when is an improper $\phi$-regret instance tightly reducible to a proper $\phi$-regret (or even an external regret) instance?''. However, the space of general reductions is hard to work with directly\footnote{Not only is it hard to work with directly, it quickly becomes meaningless without imposing any constraints on the reductions themselves. Without any such constraints, essentially any two regret-minimization instances $\cI$ and $\cI'$ with $\Rate(\cI)$ and $\Rate(\cI')$ are equivalent. This is because given an algorithm $\cA$ for $\cI$, we can just compute $\AppLoss(\cA)$ and construct an arbitrary algorithm $\cA'$ with $\AppLoss(\cA') = \AppLoss(\cA)$.} -- for this reason, we will restrict ourselves to a subclass of these reductions that we call \emph{linear equivalences}. We will formally define linear equivalences in the next section. For now, we will begin by motivating them via an application where a non-trivial tight reduction is possible: solving the problem of \emph{weighted regret minimization}. 

In the weighted regret minimization problem, we have an action set $\cP$, a loss set $\cL$, and a collection $\Phi = \{\phi_1(p), \phi_2(p), \dots, \phi_N(p)\}$ of proper affine benchmark functions (i.e., each $\phi_i$ maps $\cP$ to itself). Each $\phi_i$ has an accompanying scalar weight $w_i > 0$, representing the importance we place on regret with respect to $\phi_i$. As with the general regret minimization problem and approachability problem, the learner runs a learning algorithm to decide their action at time $t$. The goal of the learner is to minimize the expression:

\begin{equation}\label{eq:weighted-phi-regret}
\WReg(\bp, \bell) = \max_{i \in [N]}\left(w_i \cdot \left[\sum_{t=1}^{T} \langle p, \ell\rangle - \sum_{t=1}^{T} \langle \phi_i(p), \ell \rangle\right]\right).
\end{equation}

\noindent In other words, $\WReg(\bp, \ell)$ simply equals the corresponding $\phi$-regret where the regret with respect to $\phi_i$ is scaled by $w_i$. 

One way to approach a specific weighted regret minimization problem  $\cI$ is to simply ignore the weights and run an algorithm for the corresponding proper $\phi$-regret minimization problem. This would work, in the sense that a sublinear regret algorithm for the unweighted problem would also obtain sublinear regret for the weighted regret minimization problem. But completely ignoring the weights is, of course, lossy: a regret guarantee of $R$ for the unweighted problem only translates to a regret guarantee of $(\max w_i)R$ for the weighted problem. Conversely, if we have a learning algorithm with a regret guarantee of $R$ for the weighted problem, running it on the unweighted instance only translates to a regret guarantee of $R/(\min w_i)$. In the language of our different types of reductions, this naive translation of algorithms is only an $\omega$-approximate reduction, where $\omega = (\max w_i)/(\min w_i)$. 

Despite this, it is possible to exactly write the weighted regret minimization problem as a regret minimization problem of the form described in Section \ref{sec:regret-minimization}. We simply need to observe that:

\begin{equation}\label{eq:rewrite-weighted}
w_i\left(\langle p, \ell\rangle - \langle \phi_i(p), \ell\rangle\right) = \langle p, \ell \rangle - \left\langle w_i\phi_i(p) - (w_i - 1)p, \ell\right\rangle.
\end{equation}

Therefore, if we let $\widetilde{\phi}_i(p) = w_i\phi_i(p) - (w_i - 1)p$ and $\widetilde{\Phi} = \conv(\{\widetilde{\phi}_i\}_{i=1}^{N})$, then the regret minimization instance $(\cP, \cL, \widetilde{\Phi})$ is exactly equivalent to our weighted regret minimization problem. However, this regret minimization problem is in general an \emph{improper} $\phi$-regret minimization problem (even though all the original $\phi_i$ are proper). In fact, this is the case even when the benchmarks $\phi_i$ are all constant and correspond to a weighted external regret minimization problem (e.g., if $\cP = \Delta_3$, $\phi_1(p) = (1, 0, 0)$, and $w_1 = 2$, then $\widetilde{\phi}_1(p) = (2 - p_1, -p_2, -p_3)$).

It turns out that in this case, there is a tight reduction between this improper $\phi$-regret minimization problem and a proper $\phi$-regret minimization problem. 

\begin{theorem}\label{thm:wreg-reduction}
There exists a tight reduction from the improper $\phi$-regret instance $\cI = (\cP, \cL, \widetilde{\cPhi})$ to a proper $\phi$-regret instance $\cI'$.
\end{theorem}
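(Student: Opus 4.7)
The plan is to construct the proper instance $\cI'$ by rescaling each improper transformation $\widetilde{\phi}_i$ back into $\cP$ via an appropriate convex combination with the identity, and compensating by scaling the loss set. Let $W := \max_i w_i$. For each $i$, define
\[
\phi'_i(p) \;:=\; \bigl(1 - w_i/W\bigr) p + (w_i/W)\,\phi_i(p).
\]
Since each $\phi_i$ is proper (it maps $\cP$ to $\cP$) and $w_i/W \in (0,1]$, each $\phi'_i(p)$ is a convex combination of two elements of $\cP$, hence lies in $\cP$. Therefore $\phi'_i$ is proper, and so is any $\phi' \in \cPhi' := \conv(\{\phi'_i\}_{i=1}^N)$. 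We then set $\cI' := (\cP, W\cdot\cL, \cPhi')$.

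Next, I would verify that regrets match step by step under the identity correspondence on actions and the scaling $\ell \mapsto W\ell$ on losses. A direct computation gives
\[
\bigl\langle p - \phi'_i(p),\, W\ell\bigr\rangle
= W \cdot (w_i/W) \bigl\langle p - \phi_i(p),\, \ell\bigr\rangle
= w_i\bigl\langle p - \phi_i(p),\, \ell\bigr\rangle
= \bigl\langle p - \widetilde{\phi}_i(p),\, \ell\bigr\rangle,
\]
using the defining identity \eqref{eq:rewrite-weighted} of $\widetilde{\phi}_i$ in the last equality. Because the $\phi$-regret over a convex hull is a maximum of linear functionals and hence attained at an extreme point, matching the regret on each $\phi'_i$ (vs.\ $\widetilde{\phi}_i$) suffices to match $\Reg$ on all of $\cPhi'$ (vs.\ $\widetilde{\cPhi}$).

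The tight reduction is now immediate in both directions. Given an algorithm $\cA$ for $\cI$, I would define $\cA'$ by: on history $\ell'_1,\dots,\ell'_{t-1} \in W\cdot\cL$, compute $\ell_s = \ell'_s / W \in \cL$, feed these to $\cA$ to obtain $p_t$, and output $p'_t := p_t$. Conversely, given $\cA'$ for $\cI'$, define $\cA$ by: on history $\ell_1,\dots,\ell_{t-1} \in \cL$, feed $\ell'_s := W\ell_s$ to $\cA'$ and output $p_t := p'_t$. By the identity above, $\Reg(\bp',\bell') = \Reg(\bp,\bell)$ on every realization, so $\AppLoss(\cA') = \AppLoss(\cA)$ in both directions.

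The only place any care is required is the verification that $\phi'_i$ is proper; once $W \geq \max_i w_i$ is chosen so the coefficient $1 - w_i/W$ is nonnegative, everything else is algebraic bookkeeping. I do not anticipate a substantive obstacle beyond this choice of scaling constant.
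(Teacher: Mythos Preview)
Your proposal is correct and follows essentially the same approach as the paper: you construct the same $\phi'_i(p) = (1 - w_i/W)p + (w_i/W)\phi_i(p)$ with $W = \max_i w_i$, set $\cL' = W\cL$, verify the key identity $\langle p - \widetilde{\phi}_i(p), \ell\rangle = \langle p - \phi'_i(p), W\ell\rangle$, and implement the reduction via the identity on actions and the scaling $\ell \mapsto W\ell$ on losses. Your added remark about the maximum over $\cPhi$ being attained at extreme points makes explicit a step the paper leaves implicit.
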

\begin{proof}
We can construct the proper $\phi$-regret minimization problem as follows. Let $W = \max w_i$, and let $\cP' = \cP$, $\cL' = W\cL$, and $\cPhi' = \conv(\{\phi'_i\}_{i=1}^{N})$, where

$$\phi'_i(p) = \frac{w_i}{W}\phi_i(p) + \left(1 - \frac{w_i}{W}\right)p.$$

Note that each $\phi'_i$ is the convex combination of the two proper functions $\phi_i$ and the identity, and is therefore also proper. We let $\cI' = (\cP', \cL', \cPhi')$ denote this proper $\phi$-regret minimization problem.

The reduction from $\cI$ to $\cI'$ is very simple and boils down to rescaling the losses by $W$. The necessary observation is just that:
\begin{equation}\label{eq:example-equivalence}
\langle p - \widetilde{\phi}_i(p), \ell \rangle = \langle p - \phi'_i(p), W\ell\rangle.
\end{equation}

In particular, given an algorithm $\cA'$ for $\cI'$, we can construct an algorithm $\cA$ for $\cI$ with $\Reg(\cA') = \Reg(\cA)$ as follows:

\begin{enumerate}
    \item At the beginning of round $t$, $\cA$ asks $\cA'$ for the $p'_t$ it will play. $\cA$ then plays $p_t = p'_t$.
    \item $\cA$ observes their loss vector $\ell_t$. $\cA$ passes along the loss vector $\ell'_t = W\ell_t$ to $\cA'$.
\end{enumerate}

Equation \eqref{eq:example-equivalence} implies that not only do the worst-case regrets $\Reg(\cA') = \Reg(\cA)$ agree for the above pair of algorithms, but that in any execution of the reduction, both algorithms achieve exactly the same regret. A similar reduction (dividing the loss by $W$ instead of multiplying it by $W$) suffices to show that we can transform an algorithm $\cA$ for $\cI$ to an algorithm $\cA'$ for $\cI'$ with the same regret bound.
\end{proof}

\subsection{Linear equivalences}\label{sec:linear-equivalences}

Motivated by the reduction in Theorem \ref{thm:wreg-reduction}, we introduce the following notion of a ``linear equivalence'' between two regret minimization problems. In short, a linear equivalence will be a tight reduction between two regret minimization instances with the property that the reduction is given entirely by linear transformations.

Formally, let $\cI = (\cP, \cL, \cPhi)$ be a regret minimization instance with $\dim(\cP) = \dim(\cL) = d$. We say that there is a \emph{linear equivalence} between this instance and the instance $\cI' = (\cP, \cL', \cPhi')$ if there exists a bijective correspondence between $\phi \in \cPhi$ and $\phi' \in \cPhi'$ such that the following conditions hold: 

\begin{itemize}
    \item There exist affine maps $S_{\cP', \cP} \in \Aff(\cP', \cP)$ and $S_{\cL, \cL'} \in \Aff(\cL, \cL')$, such that for any $\ell \in \cL$, $p' \in \cP'$, $\ell' = S_{\cL, \cL'}\ell$, and $p = S_{\cP', \cP}p'$, we have:    \begin{equation}\label{eq:Ap_to_A}
    \langle p - \phi(p), \ell\rangle = \langle p' - \phi'(p'), \ell'\rangle.
    \end{equation}

    \item There exist affine maps $S_{\cP, \cP'} \in \Aff(\cP, \cP')$ and $S_{\cL', \cL} \in \Aff(\cL', \cL)$ such that for any $\ell' \in \cL'$, $p \in \cP$, $\ell = S_{\cL', \cL}\ell'$ and $p' = S_{\cP, \cP'}p$, we have:    \begin{equation}\label{eq:A_to_Ap}
    \langle p - \phi(p), \ell\rangle = \langle p' - \phi'(p'), \ell'\rangle.
    \end{equation}
\end{itemize}

Note that the first point above has the following implication. If you have a learning algorithm $\cA'$ for $\cI'$, you can use it to construct a learning algorithm $\cA$ for $\cI$ by following the procedure:

\begin{enumerate}
    \item At the beginning of round $t$, $\cA$ asks $\cA'$ for the $p'_t$ it will play.
    \item $\cA$ then plays $p_t = S_{\cP', \cP}p'_t$.
    \item $\cA$ observes their loss vector $\ell_t$.
    \item $\cA$ passes along the loss vector $\ell'_t = S_{\cL, \cL'}\ell_t$ to $\cA'$.
\end{enumerate}

The guarantee of equation \eqref{eq:Ap_to_A} implies that $\Reg_{T}(\cA) \leq \Reg_{T}(\cA')$. Likewise, the guarantee of equation \eqref{eq:A_to_Ap} (and its surrounding bullet) implies that given an algorithm $\cA$ for $\cI$, we can efficiently construct an algorithm $\cA'$ for $\cI'$ such that $\Reg_{T}(\cA') \leq \Reg_{T}(\cA)$. Together, this shows that a linear equivalence between $\cI$ and $\cI'$ implies a tight reduction between $\cI$ and $\cI'$.

\subsubsection{Simplifying linear equivalences}

We will now introduce a handful of simplifications to the above definition. In particular, note that our linear equivalence in Theorem \ref{thm:wreg-reduction} (for weighted regret) involved only a single invertible linear transformation. We will ultimately show that, under some mild assumptions, it suffices to consider linear equivalences specified by only a single invertible linear transformation.

First, we will assume (possibly by augmenting the sets $\cP$, $\cL$, and $\cL'$ with a fixed extra coordinate) that it is possible to express any affine map over any of these sets as a direct linear transformation, and henceforth restrict our attention to purely linear maps (for both $\phi$ and $S_{\star, \star}$). We will also let $M_{\phi}$ denote the linear transformation $\Id - \phi$; note that this lets us rewrite the regret term $\langle p, \ell\rangle - \langle \phi(p), \ell\rangle$ in the form $\langle M_{\phi}p, \ell \rangle$. This also lets us write equations \eqref{eq:Ap_to_A} and \eqref{eq:A_to_Ap} in the more compact forms
\begin{equation}\label{eq:Ap_to_A_exp}
\langle M_{\phi}S_{\cP', \cP}p', \ell\rangle = \langle M_{\phi'}p', S_{\cL, \cL'}\ell\rangle,
\end{equation}
\noindent and 
\begin{equation}\label{eq:A_to_Ap_exp}
\langle M_{\phi}p, S_{\cL', \cL}\ell'\rangle = \langle M_{\phi'}S_{\cP, \cP'}p, \ell'\rangle.
\end{equation}

We now say that a regret minimization instance $\cI = (\cP, \cL, \cPhi)$ is \emph{minimal} if the following conditions hold:
\begin{itemize}
    \item There does not exist a convex subset $\cP'$ of $\cP$ such that $\Rate(\cP', \cL, \cPhi) = \Rate(\cP, \cL, \cPhi)$.
    \item There does not exist a convex subset $\cL'$ of $\cL$ such that $\Rate(\cP, \cL', \cPhi) = \Rate(\cP, \cL, \cPhi)$.
\end{itemize}

Intuitively, the first constraint captures the property that every (extremal) action in $\cP$ should be useful for the learner; it should be impossible to achieve the same regret bound by only playing a subset of the actions. Similarly, the second constraint captures an analogous property for the adversary -- it should be the case that every extremal loss in $\cL$ is useful for constructing an optimal lower bound.

One advantage of working with minimal regret minimization instances is that reductions between minimal regret minimization instances are specified by invertible linear transformations.

\begin{lemma}\label{lem:minimal}
Let $\cI$ and $\cI'$ be two minimal regret minimization instances. If there is a linear equivalence between $\cI$ and $\cI'$, then the maps $S_{\cP', \cP}$, $S_{\cP, \cP'}$, $S_{\cL, \cL'}$ and $S_{\cL', \cL}$ must all be bijective linear transformations between the two sets.
\end{lemma}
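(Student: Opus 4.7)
The strategy is to show that each of the four maps is surjective onto its target set, and then upgrade surjectivity to bijectivity by a dimension count. Throughout I will use the fact, already established in the preceding paragraphs, that a linear equivalence induces tight reductions in both directions, so in particular $\Rate(\cI) = \Rate(\cI')$.

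First I handle the action maps. Consider $S_{\cP,\cP'} : \cP \to \cP'$, and set $\tilde{\cP}' = S_{\cP,\cP'}(\cP) \subseteq \cP'$. Applying the reduction associated with \eqref{eq:A_to_Ap} to the optimal algorithm $\cA$ for $\cI$ produces an algorithm $\cA'$ for $\cI'$ whose played actions all lie in $\tilde{\cP}'$ and whose regret equals that of $\cA$ on the corresponding loss sequence. Since $\Rate(\cI) = \Rate(\cI')$, this shows that the restricted instance $(\tilde{\cP}', \cL', \cPhi')$ admits an algorithm with rate at most $\Rate(\cI')$; since further restricting the learner's action set can only increase the rate, equality holds. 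Minimality of $\cI'$ then forces $\tilde{\cP}' = \cP'$. Symmetric reasoning, using \eqref{eq:Ap_to_A} and the minimality of $\cI$, shows that $S_{\cP',\cP}$ is surjective onto $\cP$.

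The loss maps are handled by a dual argument in which the adversary, rather than the algorithm, is funneled into the restricted set. Given any algorithm $\cA'$ for $\cI'$, the reduction associated with \eqref{eq:Ap_to_A} yields a corresponding algorithm $\cA$ for $\cI$; by tightness of the reverse direction, we can choose an adversary $\cB$ that forces regret $\Rate(\cI)\sqrt{T} - o(\sqrt{T})$ on $\cA$ using losses $\ell_t \in \cL$. Because the reduction passes $\ell'_t = S_{\cL,\cL'}(\ell_t)$ to $\cA'$, and because the identity \eqref{eq:Ap_to_A} preserves regret exactly, this induces an adversary for $\cI'$ that uses only losses in $\tilde{\cL}' := S_{\cL,\cL'}(\cL)$ and still forces the same regret on $\cA'$. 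Hence $\Rate(\cP', \tilde{\cL}', \cPhi') \geq \Rate(\cI')$; the reverse inequality is trivial from $\tilde{\cL}' \subseteq \cL'$, and minimality of $\cI'$ then forces $\tilde{\cL}' = \cL'$. The case of $S_{\cL',\cL}$ is symmetric, using minimality of $\cI$.

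It remains to upgrade surjectivity to bijectivity. Since a linear map cannot increase the dimension of the affine hull of its image, the surjectivity of both $S_{\cP,\cP'}$ and $S_{\cP',\cP}$ forces $\dim \mathrm{aff}(\cP) = \dim \mathrm{aff}(\cP')$, and a surjective linear map between vector spaces of equal finite dimension is injective; restricted to the relevant affine hulls, each action map is therefore a linear isomorphism, and in particular a bijection between the sets $\cP$ and $\cP'$. The same reasoning applies to the two loss maps. The main conceptual step I expect to be the obstacle is the loss-map argument: the linear equivalence is usually described as a translation of algorithms, and one must recognize that its symmetry in $(p, \ell)$ equally translates worst-case adversaries, so that a lower bound forced by losses in $\cL$ becomes a lower bound forced by losses in the image $S_{\cL,\cL'}(\cL)$.
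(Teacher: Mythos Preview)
Your proposal is correct and follows essentially the same approach as the paper: show each of the four maps is surjective onto its target (using minimality together with the fact that the reduction only ever uses actions or losses in the image set), then conclude bijectivity by the dimension count. The only cosmetic difference is that for the loss maps the paper argues in the ``algorithm'' direction (an algorithm for the restricted-loss instance $\cI'_L$ suffices in the reduction, giving $\Rate(\cI)\le\Rate(\cI'_L)$), whereas you argue in the dual ``adversary'' direction (a worst-case adversary for $\cI$ becomes one for $\cI'$ using only losses in $S_{\cL,\cL'}(\cL)$); these yield the same inequality.
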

\begin{proof}
We will first show that $S_{\cL, \cL'}$ must be surjective. Consider the regret minimization instance $\cI'_{L} = (\cP', S_{\cL, \cL'}\cL, \cPhi')$, whose loss set $S_{\cL, \cL'}\cL$ contains all losses of the form $S_{\cL, \cL'}\ell$ for $\ell \in \cL$. Note that the reduction from $\cI'$ to $\cI$ described at the end of Section \ref{sec:linear-equivalences} only passes loss vectors in $S_{\cL, \cL'}\cL$ to algorithm $\cA'$, and therefore shows that $\Rate(\cI) \leq \Rate(\cI'_{L})$. Since $S_{\cL, \cL'}\cL \subseteq \cL'$, we in turn have that $\Rate(\cI'_{L}) \leq \Rate(\cI')$. But finally, since there exists a linear equivalence between $\cI$ and $\cI'$, we have $\Rate(\cI) = \Rate(\cI')$, and therefore that $\Rate(\cI'_{S}) = \Rate(\cI')$.

Now, if $S_{\cL, \cL'}\cL$ were a strict subset of $\cL'$, this would violate the assumption that $\cI'$ is minimal. It follows that we must have $S_{\cL, \cL'}\cL = \cL'$, and therefore that $S_{\cL, \cL'}$ is surjective. By symmetry, the linear transformation $S_{\cL', \cL}$ is surjective onto $\cL$. These two facts imply that the dimensions of the convex sets $\cL$ and $\cL'$ must be equal, and the two linear transformations $S_{\cL, \cL'}$ and $S_{\cL', \cL}$ must in fact be bijective linear transformations between $\cL$ and $\cL'$.

Similarly, to show that $\cS_{\cP', \cP}$ is surjective, consider the regret minimization instance $\cI_{P} = (S_{\cP', \cP}\cP', \cL, \cPhi)$. Since in the reduction from $\cI'$ to $\cI$, $\cA$ plays only actions in $S_{\cP', \cP}\cP'$, this reduction actually shows that $\Rate(\cI_{P}) \leq \Rate(\cI')$. But since $S_{\cP', \cP}\cP' \subseteq \cP$, we also have $\Rate(\cI) \leq \Rate(\cI_{P})$. Finally, since $\Rate(\cI) = \Rate(\cI')$ (by the linear equivalence), all three of these rates must be equal. This means that if $S_{\cP', \cP}\cP'$ were a strict subset of $\cP$, this would violate the minimality of $\cP$. Therefore $S_{\cP', \cP}$ is surjective onto $\cP$, $S_{\cP, \cP'}$ is likewise surjective onto $\cP'$, and both transformations must be bijective linear transformations.
\end{proof}

When the transformation matrices are guaranteed to be bijective transformations between the sets (as in Lemma \ref{lem:minimal}), we can specify linear equivalences more succinctly. In particular, we can always without loss of generality take $S_{\cP', \cP} = S_{\cP, \cP'}^{-1}$ and $S_{\cL, \cL'} = S_{\cL', \cL}^{-1}$. The following Lemma shows that for the sake of classifying different regret minimization problems, we can always take $\cP' = \cP$ and $S_{\cP, \cP'} = \Id$ (as we did in our reduction in Section \ref{sec:weighted}).

\begin{lemma}\label{lem:structure}
Let $\cI = (\cP, \cL, \cPhi)$ and $\cI' = (\cP', \cL', \cPhi')$ be two linearly equivalent minimal regret minimization instances, related via $\cP' = S_{\cP, \cP'}\cP$ and $\cL' = S_{\cL, \cL'}\cL$ for invertible linear transformations $S_{\cP, \cP'}$ and $S_{\cL, \cL'}$. Then $\cI$ is also linearly equivalent to the regret minimization instance $\cI'' = (\cP'', \cL'', \cPhi'')$ where:
\begin{eqnarray*}
\cP'' &=& \cP \\
\cL'' &=& S_{\cP, \cP'}^{T}S_{\cL, \cL'}\cL \\
M_{\phi''} &=& S_{\cP, \cP'}^{-1}M_{\phi'}S_{\cP, \cP'}.
\end{eqnarray*}

\noindent
(here in the last line, $\phi''$ is the element of $\cPhi''$ corresponding to $\phi'$ in $\cPhi'$ and $\phi$ in $\cPhi$). Moreover, if $\cI'$ is a proper $\phi$-regret minimization instance, so is $\cI''$; similarly, if $\cI'$ is an external regret minimization instance, so is $\cI''$.

\end{lemma}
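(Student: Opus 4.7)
The plan is to exhibit $\cI''$ as a ``conjugation'' of $\cI'$ that pulls back its action space $\cP' = S\cP$ to $\cP$ along $S := S_{\cP,\cP'}$, and to verify the linear equivalence between $\cI$ and $\cI''$ using the identity on the action spaces (legitimate since $\cP'' = \cP$) together with a single invertible loss transformation. By Lemma~\ref{lem:minimal}, both $S = S_{\cP,\cP'}$ and $S_{\cL,\cL'}$ are bijective, so every map I write will be invertible.

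Concretely, I would set $S := S_{\cP,\cP'}$ and $T := S^T S_{\cL,\cL'}$; both are bijective linear transformations, so $\cL'' = T\cL$ is a bounded convex set and the formula $M_{\phi''} = S^{-1} M_{\phi'} S$ is equivalent to $\phi''(p) = S^{-1}\phi'(Sp)$, which is affine whenever $\phi'$ is.

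For the linear equivalence between $\cI$ and $\cI''$, I would take $S_{\cP,\cP''} = S_{\cP'',\cP} = \Id$, $S_{\cL,\cL''} = T$, and $S_{\cL'',\cL} = T^{-1}$. Starting from the hypothesized equivalence between $\cI$ and $\cI'$, which yields $\langle p - \phi(p), \ell\rangle = \langle M_{\phi'} Sp, S_{\cL,\cL'}\ell\rangle$, it suffices to verify
\begin{equation*}
\langle M_{\phi''} p, T\ell\rangle = \langle M_{\phi'} Sp, S_{\cL,\cL'}\ell\rangle.
\end{equation*}
By the adjoint identity, the left-hand side equals $\langle T^T M_{\phi''} p, \ell\rangle = \langle S_{\cL,\cL'}^T \cdot S \cdot S^{-1} M_{\phi'} S \, p, \ell\rangle = \langle M_{\phi'} Sp, S_{\cL,\cL'}\ell\rangle$, completing the check for equation~\eqref{eq:Ap_to_A_exp}. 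The second defining equation is handled symmetrically via $T^{-1}$.

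For the inheritance claims, I would use the explicit formula $\phi''(p) = S^{-1}\phi'(Sp)$: if $\phi'(\cP') \subseteq \cP'$, then $\phi''(\cP) \subseteq S^{-1}\cP' = \cP$, so $\cI''$ is proper; if each $\phi'$ is the constant $c \in \cP'$, then $\phi''$ is the constant $S^{-1}c \in \cP$, so $\cI''$ is external. The only real obstacle is keeping transposes straight in the adjoint identity; once one recognizes that $T = S^T S_{\cL,\cL'}$ is the unique choice producing the intertwining $T^T M_{\phi''} = S_{\cL,\cL'}^T M_{\phi'} S$, the argument is a mechanical calculation.
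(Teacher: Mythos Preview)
Your proposal is correct and follows essentially the same approach as the paper: verify the single equivalence identity $\langle M_{\phi}p,\ell\rangle = \langle M_{\phi''}p, S_{\cP,\cP'}^{T}S_{\cL,\cL'}\ell\rangle$ by an adjoint computation that reduces it to the original equivalence $\langle M_{\phi}p,\ell\rangle = \langle M_{\phi'}S_{\cP,\cP'}p, S_{\cL,\cL'}\ell\rangle$, and then read off $\phi''(p) = S_{\cP,\cP'}^{-1}\phi'(S_{\cP,\cP'}p)$ to deduce the inheritance claims. The only cosmetic difference is that you organize the adjoint manipulation as $T^{T}M_{\phi''} = S_{\cL,\cL'}^{T}M_{\phi'}S$ whereas the paper cancels $S_{\cP,\cP'}^{-1}$ against $S_{\cP,\cP'}^{T}$ directly inside the inner product.
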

\begin{proof}
We can verify that the necessary conditions for a linear equivalence (equations \eqref{eq:Ap_to_A_exp} and \eqref{eq:A_to_Ap_exp}) hold for the above equivalence defined between $\cI$ and $\cI''$. Since all transformations are invertible, it suffices to verify the single equation
$$\langle M_{\phi}p, \ell \rangle = \langle M_{\phi''}S_{\cP, \cP''}p, S_{\cL, \cL''}\ell\rangle.$$

Substituting in $S_{\cP, \cP''} = \Id$, $S_{\cL, \cL''} = S^{T}_{\cP, \cP'}S_{\cL, \cL'}$, and $M_{\phi''} = S_{\cP, \cP'}^{-1}M_{\phi'}S_{\cP, \cP'}$ the RHS of the above expression becomes
$$\langle  S_{\cP, \cP'}^{-1}M_{\phi'}S_{\cP, \cP'}p,S^{T}_{\cP, \cP'}S_{\cL, \cL'}\ell \rangle = \langle M_{\phi'}S_{\cP, \cP'}p, S_{\cL, \cL'}\ell \rangle.$$
The RHS of this expression is in turn equal to $\langle M_{\phi}p, \ell\rangle$ by the guarantee of the original linear equivalence, as desired. To check that the property of being a proper $\phi$-regret minimization instance / external regret minimization instance is preserved, note that from the definition of $M_{\phi''}$ we can read off:
$$\phi''(p) = S_{\cP, \cP'}^{-1}\phi'(S_{\cP, \cP'}p) = S_{\cP, \cP'}^{-1}\phi'(p').$$

\noindent
(defining $p' = S_{\cP, \cP'}p$ in the last line). Now, if $\phi'$ is proper, $\phi'(p') \in \cP'$, and therefore $S^{-1}_{\cP, \cP'}\phi'(p) \in \cP$, and it follows that $\phi''$ is proper. Similarly, if $\phi'(p')$ is constant over $p' \in \cP'$, then $\phi''(p)$ is also constant over $p \in \cP$.
\end{proof}

Inspired by Lemma \ref{lem:structure}, we will restrict ourselves for the remainder of the paper to reductions from $\cI = (\cP, \cL, \cPhi)$ to instances of the form $\cI' = (\cP, (S^{T})^{-1}\cL, \cPhi')$, where $S$ is an invertible linear transformation, where $S_{\cP, \cP'} = \Id$, $S_{\cL', \cL} = S^{T}$, and $S_{\cL, \cL'} = (S^{T})^{-1}$. For such a reduction, the $\phi'$ corresponding to $\phi$ must satisfy

\begin{equation}
\langle M_{\phi}p, S^{T}\ell'\rangle = \langle M_{\phi'}p, \ell'\rangle,
\end{equation}

\noindent
for all $\ell' \in \cL'$ and $p \in \cP$ (this follows from substituting in our specific transformations into \eqref{eq:Ap_to_A_exp}). We can rewrite this in the form:
$$\langle (SM_{\phi} - M_{\phi'})p, \ell'\rangle = 0$$

\noindent
for all $p \in \cP$ and $\ell' \in \cL'$. Since we assume that $\vecspan(\cP) = \vecspan(\cL') = \Rset^d$, this implies that

\begin{equation}
M_{\phi'} = SM_{\phi}
\end{equation}

\noindent
and in particular that

\begin{equation}\label{eq:phi_to_phip}
\phi'(p) = p + S(\phi(p) - p).
\end{equation}

\subsection{Reducing improper $\phi$-regret to external regret}

In this section we will provide a complete characterization of when (minimal) regret minimization instances are linearly equivalent to external regret minimization instances (recall that these are instances where all of the functions $\phi \in \cPhi$ are constant over $\cP$). 

Given a set of points $U$, let $\vecspan_{\Aff}(U)$ denote the \emph{affine span} of $U$, that is, the affine subspace formed by all vectors of the form $\sum_{i=1}^{k}\lambda_{i}v_i$ for any $k > 0$, $v_i \in U$, and $\lambda_{i} \in \Rset$ such that $\sum_{i} \lambda_{i} = 1$ (note that the $\lambda_i$ need not be non-negative). We prove the following characterization.

\begin{theorem}\label{thm:external-char}
Let $\cI = (\cP, \cL, \cPhi)$ be a regret minimization instance. Then $\cI$ is linearly equivalent to an external regret minimization instance if and only if the following conditions are met: 

\begin{enumerate}
    \item For all $\phi \in \cPhi$, each $\phi$ has a \emph{single} fixed point $p$ in $\vecspan_{\Aff}(\cP)$.
    \item For all $\phi_1, \phi_2 \in \cPhi$, $\phi_1(p) - \phi_2(p)$ is a constant for all $p \in \cP$. 
\end{enumerate}
\end{theorem}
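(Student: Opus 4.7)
The plan is to leverage the normal form for linear equivalences derived at the end of Section~\ref{sec:linear-equivalences}: without loss of generality, any candidate equivalence takes $\cI$ to an instance $\cI' = (\cP, (S^T)^{-1}\cL, \cPhi')$ for some invertible linear map $S$, with $\phi'(p) = p + S(\phi(p) - p)$ for each corresponding pair $(\phi, \phi')$. Writing each affine map as $\phi(p) = A_\phi p + b_\phi$, the theorem reduces to the question: does there exist an invertible $S$ such that $\phi'$ is a constant function on $\cP$ (with value in $\cP$) for every $\phi \in \cPhi$? Since a direct expansion gives $\phi'(p_1) - \phi'(p_2) = (I - S(I - A_\phi))(p_1 - p_2)$ for $p_1, p_2 \in \cP$, constancy is equivalent to the single identity
\[
  S(I - A_\phi)\,v = v \qquad \text{for every } v \in V \text{ and every } \phi \in \cPhi,
\]
where $V := \vecspan_{\Aff}(\cP) - \vecspan_{\Aff}(\cP)$ is the tangent space of the affine span. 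The theorem is thus a linear-algebraic characterization of when this system admits an invertible solution.

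For the forward direction, I would assume such an $S$ exists and apply $S^{-1}$ to the displayed identity, obtaining $(I - A_\phi)v = S^{-1}v$ for every $\phi$ and $v \in V$. Since the right-hand side does not depend on $\phi$, all linear parts $A_\phi$ coincide on $V$, yielding condition~2 at once: for any $\phi_1, \phi_2$ and $p, p' \in \cP$, $(\phi_1 - \phi_2)(p) - (\phi_1 - \phi_2)(p') = (A_{\phi_1} - A_{\phi_2})(p - p') = 0$. For condition~1, if $q_1, q_2 \in \vecspan_{\Aff}(\cP)$ were both fixed points of some $\phi$, then $(I - A_\phi)(q_1 - q_2) = 0$ with $q_1 - q_2 \in V$ would force $S^{-1}(q_1 - q_2) = 0$, and invertibility of $S$ gives $q_1 = q_2$. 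Existence of at least one fixed point in $\cP \subseteq \vecspan_{\Aff}(\cP)$ is inherited from the fact that $\cI$ has finite rate (being linearly equivalent to an external regret instance).

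For the backward direction, I would first use condition~2 evaluated at $p_1, p_2 \in \cP$ to conclude that all $A_\phi$ share a common restriction $\bar{A}: V \to \Rset^d$. Condition~1 then forces $I - \bar{A}$ to be injective on $V$ (otherwise any nonzero $v \in V$ with $(I - \bar{A})v = 0$ would produce a second fixed point of $\phi$ by translating the given one by $v$), making $I - \bar{A}$ a linear isomorphism from $V$ onto $W := (I - \bar{A})(V)$. Let $\bar{S}: W \to V$ denote its inverse. To promote $\bar{S}$ to an invertible operator $S$ on all of $\Rset^d$, I would fix complementary subspaces $\Rset^d = W \oplus W'$ and $\Rset^d = V \oplus V'$ (of matching dimensions, since $\dim W = \dim V$) and extend $\bar{S}$ by any linear isomorphism $W' \to V'$. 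Substituting back, $S(I - A_\phi)v = v$ on $V$ for every $\phi$, so $\phi'(p) = p + S(\phi(p) - p)$ is constant on $\cP$; evaluating at the unique fixed point $q_\phi \in \cP$ of $\phi$ shows the constant value is $q_\phi \in \cP$, confirming that $\cPhi'$ consists of proper constant maps.

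The step requiring the most care is the extension of $\bar{S}$ to an invertible operator on $\Rset^d$, since the required identity only pins down $S$ on $W$. I do not expect this to be a genuine obstacle — it is a routine direct-sum construction once one notes $\dim W = \dim V$ — but it highlights why the characterization is phrased in terms of the affine span rather than all of $\Rset^d$: the identification $\vecspan_{\Aff}(\cP) \simeq \Rset^d$ fails whenever $\cP$ is not full-dimensional (e.g., $\cP = \Delta_d$), and trying to work directly with fixed points in $\Rset^d$ would spuriously rule out the swap-regret-style instances the theorem is designed to classify.
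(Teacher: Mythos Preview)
Your proof is correct and follows essentially the same route as the paper's: both reduce to the normal form $\phi'(p)=p+S(\phi(p)-p)$, translate constancy into the identity $S(I-A_\phi)=\Id$ on the tangent space of $\vecspan_{\Aff}(\cP)$, use condition~1 for injectivity and condition~2 for the common linear part, and then invert and extend to an invertible $S$. The paper's organization differs only cosmetically---it isolates the single-$\phi$ case as Lemma~\ref{lemma:one-phi-external} (via an explicit basis) and then handles the full $\cPhi$ by fixing a representative $\psi$---whereas you do the linear algebra for all $\phi$ simultaneously; one small point to tighten is your appeal to ``finite rate'' for existence of a fixed point, which is cleaner (and avoids the $\cone(\cL)=\Rset^d$ hypothesis of Theorem~\ref{thm:reg-to-app}) if you instead observe that the constant value $c\in\cP$ of $\phi'$ satisfies $\phi'(c)=c$ and hence $\phi(c)=c$.
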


Note that by the assumption that the instance $\cI$ is valid, every $\phi \in \cPhi$ must contain a fixed point in $\cP$; condition 1 above rules out the existence of any other fixed points in the entire affine subspace containing $\cP$.

Before we proceed to prove Theorem \ref{thm:external-char}, we consider a few illustrative examples:

\begin{itemize}
    \item \textbf{Classic external regret}: In the classic external-regret setting of learning with experts, $\cP = \Delta_{d}$, $\cL = [0, 1]^{d}$, and the set $\cPhi$ contains all constant functions $\phi: \cP \rightarrow \cP$. If $\phi$ is the constant map $\phi(p) = p_{\phi}$ (for some fixed $p_{\phi} \in \cP$), then $\phi$ has the unique fixed point $p_{\phi}$. It is also clear that the difference of any two constant functions is constant.
    \item \textbf{Improper regret that can be reduced to external regret}: Consider the setting where $\cP = \Delta_{2}$, $\cL = [0, 1]^2$, and $\cPhi$ contains all linear functions of the form $\phi_{\alpha}(x, y) = (x, (2-\alpha)y -\alpha x)$ for all $\alpha \in [0, 1]$. Note that this is an improper $\phi$-regret minimization instance since $\phi_{\alpha}$ often sends points in the simplex to points outside the simplex (e.g., $\phi_{1/2}(2/3, 1/3) = (2/3, 1/6)$). Nonetheless, $\phi_{\alpha}$ satisfies the constraints of Theorem \ref{thm:external-char}: $(1-\alpha, \alpha)$ is a fixed point of $\phi_{\alpha}$, and $\phi_{\alpha_1}(p, 1-p) - \phi_{\alpha_2}(p, 1-p) = (0, \alpha_2 - \alpha_1)$ for any $(p, 1-p) \in \Delta_2$. And indeed, one can check that under the invertible transformation matrix    
$$S = \begin{pmatrix} 
0 & 1 \\
1 & -1
\end{pmatrix}$$
$\phi_{\alpha}(x,  y)$ gets transformed (via \eqref{eq:phi_to_phip}) to $\phi'_{\alpha}(x, y) = ((1-\alpha)(x+y), \alpha(x+y))$ (which takes on the constant value $(1-\alpha, \alpha)$ for $(x, y) \in \Delta_2$).

\item \textbf{Swap regret}: Swap regret minimization is a proper $\phi$-regret minimization where $\cP = \Delta_d$, $\cL = [0,1]^d$, and $\cPhi$ is the convex hull of all $\phi_{\pi}(p, \ell) = \sum_{i=1}^{d}p_{i}(\ell_{i} - \ell_{\pi(i)})$ where $\pi$ ranges over all ``swap functions'' from $[d]$ to $[d]$. It can easily be checked that these functions do not differ by a constant (for $d > 1$)\footnote{The case $d=2$ is interesting. When $d=2$ it is possible to bound swap regret by at most twice external regret, and so sublinear external regret implies sublinear swap regret. But there is still no linear equivalence between swap regret and external regret in this case, just as there is none between weighted external regret and external regret in the next example.} and therefore by Theorem \ref{thm:external-char} there is no linear equivalence between swap regret and external regret.

\item \textbf{Weighted external regret}: Consider the weighted regret minimization setting of Section \ref{sec:weighted} where each original function $\phi_i(p)$ is a constant function (so this captures a weighted version of external regret minimization). The expression \eqref{eq:rewrite-weighted} describes how to express this in terms of an improper $\phi$-regret minimization problem with functions $\widetilde{\phi_i}(p)$. Looking at the expressions for $\widetilde{\phi_i}(p)$, we can see that a weighted external regret minimization instance is linearly equivalent to an ordinary external regret instance if and only if all the weights $w_i$ are equal.
\end{itemize}

\begin{figure}[t]
\centering
\includegraphics[scale=0.2]{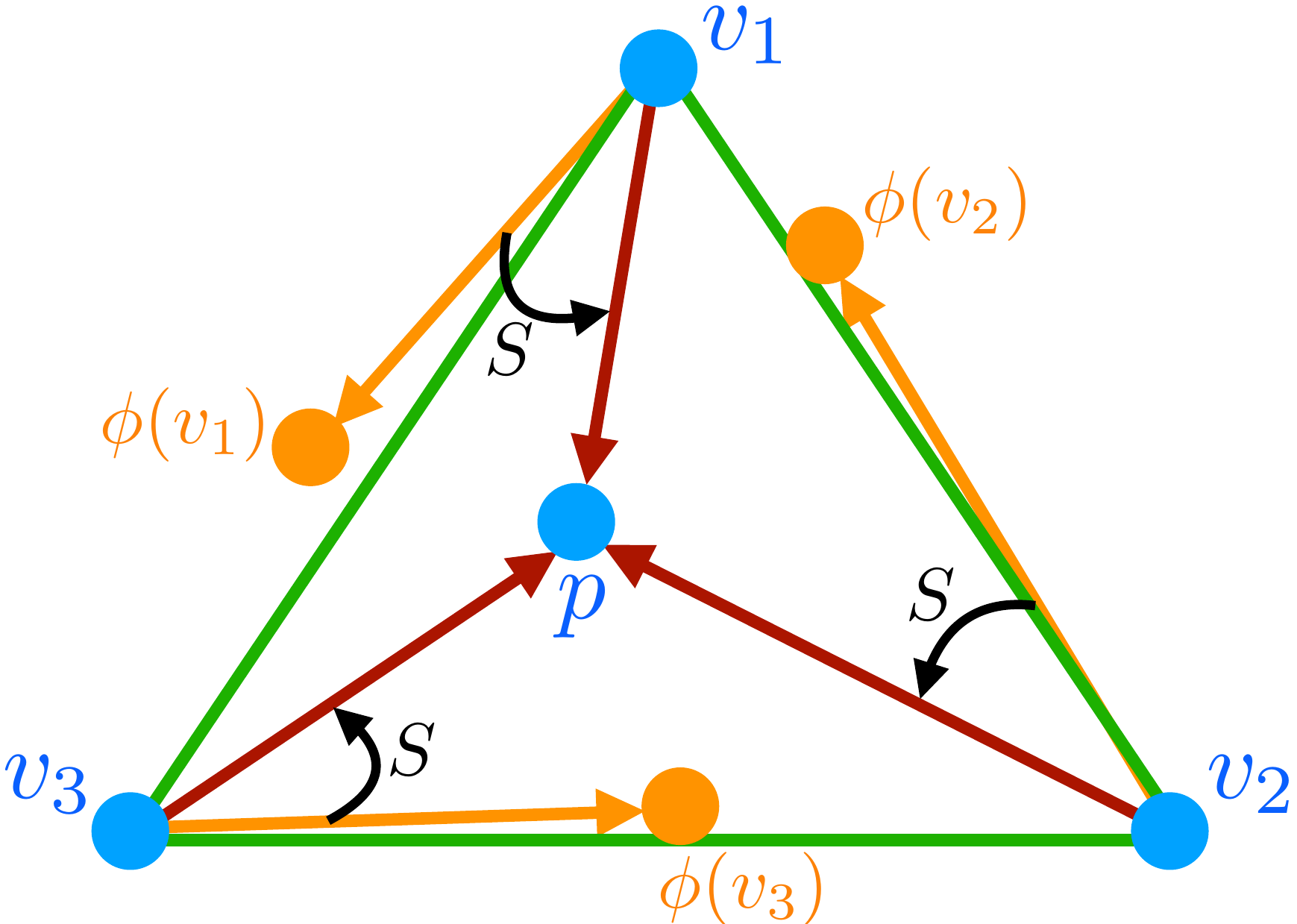}
\caption{Illustration of the definition of $S$ in
the proof of Lemma~\ref{lemma:one-phi-external}. $v_1, v_2, v_3$ represent the vertices of the simplex in $\Rset^3$ and $p$ is the fixed point of $\phi$. The vector $\phi(v_k)$, $k \in [3]$, may not be in the simplex. The linear function $S$ maps vector $(\phi - I)(v_k)$ into $p - v_k$.}
\label{fig:true-detective}
\end{figure}

To prove Theorem \ref{thm:external-char}, we will use the following lemma (which proves the above characterization in the case of a single transformation $\phi$).

\begin{lemma}
\label{lemma:one-phi-external}
Let $\phi: \Rset^{d} \rightarrow \Rset^{d}$ be a linear function, and let $\cP \subseteq \Rset^{d}$ be a convex set. Then the following two conditions are equivalent:

\begin{itemize}
    \item There exists an invertible linear transformation $S$ such that $\phi'(p) = p + S(\phi(p) - p)$ is constant for all $p \in \cP$.
    \item $\phi(p)$ has a single fixed point $p_{\phi}$ in $\vecspan_{\Aff}(\cP)$. 
\end{itemize}
\end{lemma}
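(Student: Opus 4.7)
The plan is to recast both conditions as a single statement about injectivity of one linear map, and then produce $S$ by a standard basis-extension argument.

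Set $M := \phi - I$, so $\phi(p) - p = Mp$ and fixed points of $\phi$ are precisely the elements of $\ker M$. Fix any $p_0 \in \cP$ and let $V$ denote the direction space of $\vecspan_{\Aff}(\cP)$, i.e., $V := \vecspan\{p_1 - p_2 : p_1, p_2 \in \cP\}$, so that $\vecspan_{\Aff}(\cP) = p_0 + V$. Since $\phi'(p) = (I + SM)p$ is a linear function of $p$, it is constant on $\cP$ if and only if $(I + SM)(p_1 - p_2) = 0$ for all $p_1, p_2 \in \cP$, which is equivalent to $SMv = -v$ for every $v \in V$. Simultaneously, fixed points of $\phi$ inside $p_0 + V$ correspond exactly to solutions $v \in V$ of the equation $Mv = -Mp_0$; since existence of at least one such $v$ is guaranteed by the ambient regret-minimization assumption that $\phi$ has a fixed point in $\cP$, the uniqueness hypothesis in (ii) is equivalent to injectivity of $M|_V$.

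With these reformulations, (i)$\Rightarrow$(ii) is immediate: if $SM|_V = -I|_V$ and $S$ is invertible, then $M|_V$ must have trivial kernel. For (ii)$\Rightarrow$(i), assuming $M|_V$ is injective, $M$ restricts to a linear bijection $V \to W$ where $W := M(V)$, and I would define $S_0 : W \to V$ by $S_0(Mv) := -v$. To extend $S_0$ to an invertible linear map $S : \Rset^d \to \Rset^d$, I would choose any linear complements $W \oplus W' = \Rset^d$ and $V \oplus V' = \Rset^d$; since $\dim W = \dim V$, we have $\dim W' = \dim V'$, so any linear bijection $T : W' \to V'$ exists, and $S := S_0 \oplus T$ is invertible and still satisfies $SMv = -v$ for every $v \in V$. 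This $S$ witnesses (i).

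The main obstacle is conceptual rather than computational: one must carefully separate the affine object $\vecspan_{\Aff}(\cP)$ from its linear direction space $V$, since $SM$ acts linearly while the fixed-point set is affine. Once $V$ is identified as the natural domain on which $SM$ should equal $-I$, the rest is the routine linear-algebra fact that a bijection between two equidimensional subspaces of $\Rset^d$ extends to an invertible endomorphism of the ambient space via any choice of complements. The geometric picture in Figure~\ref{fig:true-detective}, in which $S$ sends each $\phi(v_k) - v_k$ to $p - v_k$, is precisely the witness produced by this construction when $\cP$ is the simplex with vertices $v_k$ and fixed point $p$.
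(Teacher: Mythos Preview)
Your proof is correct and follows essentially the same approach as the paper. The paper works in coordinates—choosing $k$ points $v_1,\dots,v_k$ that together with the fixed point $p_\phi$ affinely span $\cP$, showing the vectors $w_i = \phi(v_i)-v_i$ are linearly independent, and defining $S$ on them—whereas you carry out the identical argument coordinate-free by working directly with the direction space $V$ and the map $M|_V$; your $S_0(Mv) = -v$ is exactly the paper's $S(w_i) = p_\phi - v_i$ once one substitutes $v = v_i - p_\phi$. Your explicit remark that existence of a fixed point comes from the ambient regret-minimization assumption is a point the paper leaves implicit (its (i)$\Rightarrow$(ii) argument only establishes uniqueness), so if anything your write-up is slightly more careful.
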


\begin{proof}
First, assume there does exist such a linear transformation $S$. Assume to the contrary that $\phi$ has two distinct fixed points $p_1, p_2 \in \vecspan_{\Aff}(\cP)$. Note that $\phi'(p_1) = p_1 + S(\phi(p_1) - p_1) = p_1$ so $p_1$ is a fixed point of $\phi'$; likewise $p_2$ is a fixed point of $\phi'$. But if $\phi'$ is constant on $\cP$, it must (as a linear transformation) must also be constant on $\vecspan_{\Aff}(\cP)$, implying that we must have $\phi'(p_1) = \phi'(p_2)$, a contradiction.

Now, assume $\phi(p)$ contains a single fixed point $p_{\phi} \in \vecspan_{\Aff}(\cP)$. If $\dim(\vecspan_{\Aff}(\cP)) = k$, pick any $k$ points $v_1, v_2, \dots, v_k \in \cP$ that together with $p_{\phi}$ affinely span $\cP$. In particular, the $k$ vectors $(v_i - p_{\phi})$ are linearly independent. 

For each $i \in [k]$, let $w_i = \phi(v_i) - v_i$. We claim that the $k$ vectors $w_i$ are also linearly independent. If they were not, there would exist a non-trivial linear combination $\sum_{i}\lambda_i w_i$ equal to $0$. Expanding this out (and using the fact that $\phi$ is linear), we have 
$$\phi\left(\sum_{i}\lambda_{i}v_i\right) = \sum_{i}\lambda_{i}v_i.$$

Let $L = \sum_{i}\lambda_{i}$. Adding $(1-L)p_{\phi}$ to both sides (and using the fact that $\phi(p_{\phi}) = p_{\phi}$) we have
$$\phi\left((1-L)p_{\phi} + \sum_{i}\lambda_{i}v_i\right) = (1-L)p_{\phi} + \sum_{i}\lambda_{i}v_i.$$

But the element $v = (1-L)p_{\phi} + \sum_{i}\lambda_{i}v_i$ belongs to $\vecspan_{\Aff}(\cP)$, so (by assumption) the only way it could be a fixed point of $\phi$ is if $v = p_{\phi}$. But $v - p_{\phi} = \sum_{i}\lambda_{i}(v_i - p)$, so this would imply that the original vectors $p - v_i$ were not linearly independent, a contradiction.

We can now directly construct our transformation $S$. We will choose any invertible $S$ that satisfies
$$S(\phi(v_i) - v_i) = p_{\phi} - v_i$$

\noindent
for all $i \in [k]$. Since both sets of vectors $w_i = \phi(v_i) - v_i$ and $p_{\phi} - v_i$ are linearly independent, we can choose an invertible such $S$ (technically we also need to specify the action of $S$ on vectors not contained in the span of $w_i$, but for these we can choose an arbitary invertible transformation between the remaining sets of basis vectors). 

We now claim that the resulting $\phi'(p) = p + S(\phi(p) - p)$ is a constant function (and in fact, equals $p_{\phi}$ for all $p \in \cP$). To see this, write any $p$ in the form $p = p_{\phi} + \sum_{i=1}^{k}\lambda_{i}(v_i - p_{\phi})$ for some $\lambda_{i}  \in \Rset$. Then
$$S(\phi(p) - p) = \sum_{i=1}^{k}\lambda_i S(\phi(v_i) - v_i) = \sum_{i=1}^{k}\lambda_{i}(p_{\phi} - v_i) = -\sum_{i=1}^{k}\lambda_{i}(v_i - p_{\phi}),$$

\noindent
and therefore $p + S(\phi(p) - p) = p_{\phi}$, as desired.
\end{proof}

We can now complete the proof of Theorem \ref{thm:external-char}.

\begin{proof}[Proof of Theorem~\ref{thm:external-char}]
We first prove that both conditions are necessary. The necessity of the first condition follows immediately from Lemma~\ref{lemma:one-phi-external}. To see that the second condition is necessary, consider any $\phi_1, \phi_2 \in \cPhi$. By \eqref{eq:phi_to_phip} we have
\begin{align*}
\phi'_1(p) &= p + S(\phi_1(p) - p) \\
\phi'_2(p) &= p + S(\phi_2(p) - p)
\end{align*}

\noindent
and therefore
$$\phi'_1(p) - \phi'_2(p) = S(\phi_1(p) - \phi_2(p)).$$

If $S$ is invertible and $\phi'_1(p) - \phi'_2(p)$ is a constant for all $p \in \cP$, then $\phi_1(p) - \phi_2(p)$ must also be a constant for all $p \in \cP$.

To show that these two conditions are sufficient, single out a representative $\psi \in \cPhi$. By Lemma~\ref{lemma:one-phi-external} we can construct an $S$ such that the corresponding $\psi'$ satisfies
$$\psi'(p) = p + S(\psi(p) - p) = p_{\psi}$$

\noindent 
for all $p \in \cP$. Now, since any pair of functions in $\cPhi$ differ by a constant, for each $\phi \in \cPhi$, write $\phi(p) = \psi(p) + \delta_{\phi}$ where $\delta_{\psi}$ is some $p$-independent constant in $\Rset^{d}$. If we let $p_{\phi}$ denote the fixed point of $\phi$ in $\cP$, then since $\phi(p_{\phi}) = p_{\phi}$, so we can deduce that $\delta_{\phi} = p_{\phi} - \psi(p_{\phi})$. Note then that
\begin{eqnarray*}
\phi'(p) &=& p + S(\phi(p) - p) \\
&=& p + S(\psi(p) + \delta_{\phi} - p) \\
&=& p + S(\psi(p) - p + p_{\phi} - \psi(p_{\phi})) \\
&=& p + (p_{\psi} - p) - (p_{\psi} - p_{\phi}) \\
&=& p_{\phi},
\end{eqnarray*}

\noindent
so each $\phi'(p)$ is constant for all $p \in \cP$.
\end{proof}

\subsection{Reducing improper $\phi$-regret to proper $\phi$-regret}

Theorem \ref{thm:external-char} shows that the property of being equivalent to an external regret minimization problem is rather stringent: only very structured improper $\phi$-regret minimization problems (and hence, approachability instances) can be reduced via linear transformations to such instances.

However, the class of proper $\phi$-regret minimization problems is far broader than the class of external regret minimization problems. Notably, every regret minimization problem we have considered thus far either is or is linearly equivalent to a proper $\phi$-regret minimization instance. This raises the natural question of whether every improper $\phi$-regret minimization instance can be linearly transformed into a proper one.

In this section we show that the answer to this question is no. In Section \ref{sec:counterexamples} we give some examples of ``atypical'' improper $\phi$-regret minimization problems that we prove are not linearly equivalent to any proper $\phi$-regret minimization problems.

Given this, we can also ask whether (in the same vein as Theorem \ref{thm:external-char}) we can cleanly characterize the set of regret-minimization problems which are equivalent to proper $\phi$-regret minimization. In Section \ref{sec:algorithmic-characterization} we give an algorithmic procedure for deciding whether a given regret-minimization instance can be reduced to a proper one in the case where the sets $\cP$, $\cL$, and $\cPhi$ are all polytopes.

\subsubsection{Irreducible improper instances}\label{sec:counterexamples}

In this section we provide two (classes of) examples of improper $\phi$-regret minimization problems that cannot be linearly reduced to a proper $\phi$-regret minimization problem. These two examples will each illustrate different obstructions to the property of being reducible.

Throughout this section, we will assume that $\cP = \Delta_{d}$ and $\cL = [0, 1]^{d}$, and only vary the set $\cPhi$. Before we introduce our first class of examples, we will prove the following lemma, which shows that if the instance $\cI = (\cP, \cL, \cPhi)$ is linearly equivalent to a proper $\phi$-regret minimization problem, then all transformations in $\cPhi$ must share a left eigenvector of eigenvalue $1$.

\begin{lemma}\label{lem:left-eigen}
 Assume $\cP$ is contained within an affine subspace of $\Rset^d$. If $(\cP, \cL, \cPhi)$ is linearly equivalent to an instance of proper $\phi$-regret minimization, then there exists a non-zero $v \in \Rset^d$ such that $\langle \phi(p), v \rangle = \langle p, v \rangle$ for any $\phi \in \cPhi$ and $p \in \cP$.
\end{lemma}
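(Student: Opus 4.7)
The plan is to exploit the normal form for linear equivalences established at the end of Section~\ref{sec:linear-equivalences}, in which the reduced instance can be taken to be of the form $\cI' = (\cP, (S^{T})^{-1}\cL, \cPhi')$ for an invertible linear transformation $S$, with the correspondence $\phi' \in \cPhi'$ of $\phi \in \cPhi$ given by
\begin{equation*}
  \phi'(p) \;=\; p + S(\phi(p) - p).
\end{equation*}
If this normal form is not directly available (e.g.\ the instances are not minimal), Lemma~\ref{lem:structure} can be invoked first; it preserves properness of $\cPhi'$, so there is no loss of generality.

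First, I would extract the affine relation satisfied by $\cP$. Since $\cP$ lies in a proper affine subspace of $\Rset^d$, there exist a non-zero $w \in \Rset^d$ and a scalar $c$ such that $\langle p, w\rangle = c$ for all $p \in \cP$.

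Next, I would use properness of the reduced instance. Because each $\phi' \in \cPhi'$ maps $\cP$ into $\cP$, for every $p \in \cP$ we have $\langle \phi'(p), w\rangle = c = \langle p, w\rangle$, and therefore
\begin{equation*}
  \langle \phi'(p) - p,\, w\rangle \;=\; 0.
\end{equation*}
Substituting $\phi'(p) - p = S(\phi(p) - p)$ and moving $S$ to the other side of the inner product yields
\begin{equation*}
  0 \;=\; \langle S(\phi(p) - p),\, w\rangle \;=\; \langle \phi(p) - p,\, S^{T}w\rangle.
\end{equation*}
Setting $v = S^{T}w$ gives $\langle \phi(p), v\rangle = \langle p, v\rangle$ for every $\phi \in \cPhi$ and $p \in \cP$. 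Finally, $v \neq 0$ because $S$ is invertible and $w \neq 0$, which completes the argument.

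The only subtlety I anticipate is justifying the move to the normal form $\phi'(p) = p + S(\phi(p) - p)$ with a single invertible $S$ acting on $\cP$ itself; this is exactly what Lemma~\ref{lem:structure} provides, together with the preservation of properness under that transformation. Once the normal form is in place, the rest of the argument is a short linear-algebra computation using only the affine defining equation of $\cP$ and the defining property of a proper benchmark.
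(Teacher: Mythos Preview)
Your proposal is correct and follows essentially the same argument as the paper's proof: use the normal form $\phi'(p)=p+S(\phi(p)-p)$ from \eqref{eq:phi_to_phip}, apply the affine defining equation of $\cP$ to both $p$ and $\phi'(p)\in\cP$, and conclude that $v=S^{T}w$ works. Your write-up is in fact slightly more careful than the paper's in explicitly noting that $v\neq 0$ follows from the invertibility of $S$; the only minor wrinkle is that Lemma~\ref{lem:structure} itself assumes minimality, so your parenthetical about invoking it when the instances are not minimal is backwards---but the paper handles this the same way, simply adopting the single-$S$ normal form as a standing convention for the remainder of the section.
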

\begin{proof}
By \eqref{eq:phi_to_phip}, this linear equivalence sends $\phi(p)$ to the function $\phi'(p) = p + S(\phi(p) - p)$, for some invertible linear transformation $S$. Since the target regret minimization instance is proper, $\phi'(p)$ is guaranteed to lie in $\cP$.

Now, since $\cP$ is contained within an affine subspace of $\Rset^{d}$, there exists some $w \in \Rset^{d}$ and $b \in \Rset$ such that $\langle p, w \rangle = b$ for all $p \in \cP$. This implies that
$$\langle \phi'(p), w\rangle = \langle p, w\rangle + \langle S(\phi(p) - p), w \rangle.$$

Since $\langle \phi'(p), w\rangle = \langle p, w \rangle = b$, this reduces to $\langle S(\phi(p) - p), w\rangle = 0$, which in turn can be written as $\langle \phi(p) - p, S^{T}w\rangle = 0$. Therefore the vector $S^{T}w$ satisfies the constraints for $v$ in the lemma statement.
\end{proof}

Note that since the simplex $\Delta_d$ is contained within the affine hyperplane $\sum_{i} p_i = 1$, Lemma~\ref{lem:left-eigen} applies for all examples in this section. Also, we can alternatively think of the statement of Lemma \ref{lem:left-eigen} as stating that the matrices $M_{\phi}$ must all share a non-trivial left kernel element (an element $v$ such that $v^{T}M_{\phi} = 0$). To construct an irreducible instance, it suffices to find examples of convex sets of matrices that do not all share the same left kernel element. 

Constructing such a set is made more difficult by the fact that each $\phi \in \cPhi$ must have a fixed point in $\cP$, which translates to the fact that every matrix $M_{\phi}$ must have a non-trivial right kernel element $p_{\phi}$ whose entries are all non-negative (any such element can be scaled to lie in $\cP = \Delta_d$). We would also like the different $M_{\phi}$ to not all share the same right kernel element (because in that case each $\phi \in \cPhi$ would have the same fixed point, and the instance would not be minimal). 

We can construct such a set by noticing that the set of skew-symmetric matrices of odd dimension is a linear subspace of the set of matrices. In particular, if for some $a, b, c \in \Rset_{\geq 0}$ we consider the skew-symmetric matrix
$$M_{\phi_{a,b,c}} = \begin{pmatrix}
0 & a & -b \\
-a & 0 & c \\
b & -c & 0
\end{pmatrix},$$
\noindent
then the vector $v = (c, b, a)$ belongs to both the left-kernel and right-kernel of $M_{\phi_{a,b,c}}$. In particular, this means that if we take $\cPhi = \conv(\phi_{1,0,0}, \phi_{0,1,0}, \phi_{0,0,1})$ where:
\begin{align*}
\phi_{1,0,0}(p_1, p_2, p_3) &= (p_1 - p_2,\, p_1 + p_2,\, p_3) \\
\phi_{0,1,0}(p_1, p_2, p_3) &= (p_1 + p_3,\, p_2,\, p_3 - p_1) \\
\phi_{0,0,1}(p_1, p_2, p_3) &= (p_1,\, p_2 - p_3,\, p_2 + p_3),
\end{align*}

\noindent
then $\cI = (\cP, \cL, \cPhi)$ is a valid improper $\phi$-regret minimization instance which, by Lemma \ref{lem:left-eigen}, is \emph{not} linearly equivalent to a proper $\phi$-regret minimization instance.

One might wonder whether Lemma \ref{lem:left-eigen} is the sole obstacle to linear equivalence to proper $\phi$-regret minimization. Interestingly, this is not the case. We now give a second $\cPhi$ that does not violate Lemma~\ref{lem:left-eigen} (i.e., all $M_{\phi}$ share a left-kernel element), but where the instance $(\cP, \cL, \cPhi)$ is still provably not linearly equivalent to proper $\phi$-regret minimization. To do so, we will rely on the following obstruction.

\begin{lemma}\label{lem:no-neg}
Consider an improper $\phi$-regret minimization instance $\cI = (\cP, \cL, \cPhi)$. If there exists an extreme point $x$ of $\cP$ and $\phi_1, \phi_2 \in \cPhi$ with the property that
$$\phi_2(x) - x = -\alpha(\phi_1(x) - x) \neq 0$$

\noindent
for some real $\alpha > 0$, then $\cI$ is not linearly equivalent to a proper $\phi$-regret minimization instance.
\end{lemma}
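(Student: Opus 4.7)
My approach is proof by contradiction. Suppose $\cI = (\cP, \cL, \cPhi)$ is linearly equivalent to a proper $\phi$-regret minimization instance. Using the normal form established after Lemma~\ref{lem:structure}, the equivalence is realized by an invertible linear transformation $S$ under which each $\phi \in \cPhi$ corresponds to $\phi'(p) = p + S(\phi(p) - p)$, and properness of the target instance gives $\phi'(p) \in \cP$ for all $p \in \cP$. I would apply this at the extreme point $x$ with the two hypothesized transformations: setting $v = S(\phi_1(x) - x)$, we get $\phi'_1(x) = x + v$, and then the hypothesized relation $\phi_2(x) - x = -\alpha(\phi_1(x) - x)$ together with linearity of $S$ gives $\phi'_2(x) = x - \alpha v$. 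Both of these points must lie in $\cP$.

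The crux is then a convex combination argument. Since $\alpha > 0$, the weights $\lambda = \alpha/(1+\alpha)$ and $1-\lambda = 1/(1+\alpha)$ are both strictly positive and satisfy
\[
\lambda(x + v) + (1 - \lambda)(x - \alpha v) = x.
\]
This exhibits $x$ as a strict convex combination of two points of $\cP$. Extremality of $x$ forces $x + v = x - \alpha v = x$, hence $v = 0$. Invertibility of $S$ then yields $\phi_1(x) - x = S^{-1}v = 0$, which contradicts the hypothesis that $\phi_1(x) - x \neq 0$ (nonvanishing of this vector is implied by $\phi_2(x) - x \neq 0$ together with the stated proportionality).

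The main obstacle -- really the only subtle point -- is recognizing why the strict positivity of $\alpha$ (rather than just $\alpha \neq 0$) is essential. If $\alpha$ were negative, the two perturbed points $x + v$ and $x - \alpha v$ would lie on the \emph{same} side of $x$ along the line through $x$ in direction $v$, and $x$ could no longer be written as a strict convex combination of them, so extremality would give us nothing. The sign condition in the hypothesis is exactly calibrated to produce two points of $\cP$ straddling $x$, at which point the extremality of $x$ supplies the contradiction almost mechanically.
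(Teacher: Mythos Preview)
Your proof is correct and follows essentially the same approach as the paper's: both argue by contradiction, apply \eqref{eq:phi_to_phip} at the extreme point $x$ to obtain $\phi'_1(x) = x + w$ and $\phi'_2(x) = x - \alpha w$ (with $w = S(\phi_1(x)-x)$), and conclude that an extreme point cannot lie strictly between two points of $\cP$. The paper states this last step as a bare assertion, whereas you spell out the convex combination $\frac{\alpha}{1+\alpha}(x+w) + \frac{1}{1+\alpha}(x-\alpha w) = x$ explicitly; your added remark on why $\alpha > 0$ is essential is also on point.
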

\begin{proof}
If such a linear equivalence existed, by \eqref{eq:phi_to_phip}, we would have that $\phi'_1(x) = x + S(\phi_1(x) - x)$ and $\phi'_2(x) = x + S(\phi_2(x) - x) = x - \alpha S(\phi_1(x) - x)$. But since $x$ is an extreme point of $\cP$, it is impossible for both $x + w$ and $x - \alpha w$ to both lie in $\cP$ for a non-zero vector $w$.
\end{proof}

We now present a valid improper $\phi$-regret minimization instance (found by computer search) where Lemma \ref{lem:no-neg} applies but Lemma \ref{lem:left-eigen} does not. This example is parametrized by the two matrices:
$$A = \begin{pmatrix}
-6 & 8 & -9 \\
2 & -1 & -9 \\
4 & -7 & 18
\end{pmatrix},
B = \begin{pmatrix}
10 & -3 & -7 \\
6 & -6 & 10 \\
-16 & 9 & -3
\end{pmatrix}.$$

\noindent
Note that both $A$ and $B$ share the same left-kernel element $(1, 1, 1)$ (so they both map the hyperplane containing $\cP$ to itself). We will consider the set of regret functions $\cPhi = \{\Id + \gamma_{a}A + \gamma_{b}B \mid \gamma_{a} \in [-1, 1], \gamma_{b} \in [-1, 1]\}$. One can also verify computationally that all elements of $\cPhi$ contain a fixed point within the simplex $\Delta_{3}$ and that this fixed point is not static and changes depending on $\gamma_{b}/\gamma_{a}$ (see Figure \ref{fig:eigenvector_plot}). 

\begin{figure}
    \centering
    \includegraphics[width=0.4\textwidth]{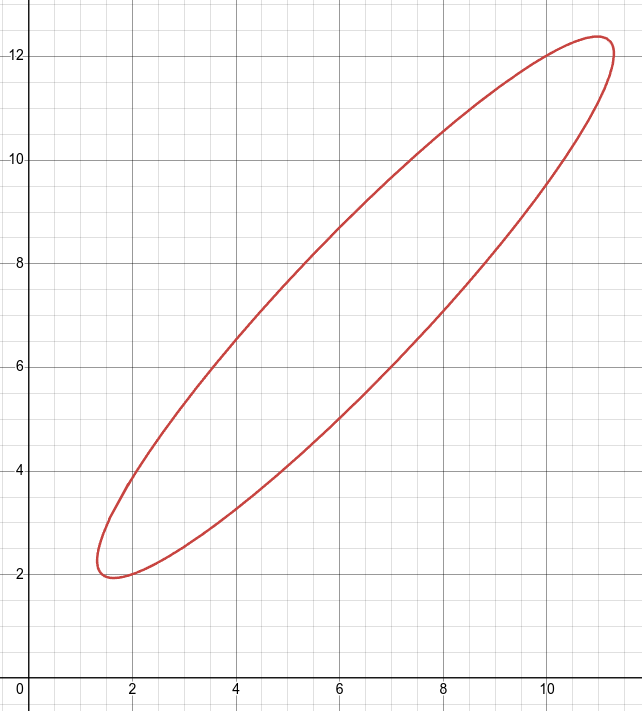}
    \caption{For any $t$, the matrix $M(t) = A + tB$ has a right kernel element of the form $(x(t), y(t), 1)$, where $x(t) = \frac{72t^2-46t + 81}{2(21t^2 + 8t + 5)}$ and $y(t) = \frac{72t^2 - 41t + 36}{21t^2 + 8t + 5}$. The above diagram plots $(x(t), y(t))$ for all $t \in \Rset$, and shows that this right kernel element always has non-negative entries (and hence has a multiple that lies in $\Delta_3$).}
    
    \label{fig:eigenvector_plot}
\end{figure}

But now, consider the two functions $\phi_{1}(p) = p + Ap$ and $\phi_{2}(p) = p - Ap$ both belonging to $\cPhi$. For any $x \in \cP$ it is the case that $\phi_1(x) - x = Ax = - (\phi_2(x) - x)$. Choosing $x$ to b e the extreme point $(1, 0, 0)$, $Ax$ is non-zero and thus by Lemma \ref{lem:left-eigen} this instance is not equivalent to a proper $\phi$-regret minimization instance.

\subsubsection{An algorithmic characterization}\label{sec:algorithmic-characterization}

Finally, we will show how to algorithmically decide whether a given regret minimization instance $\cI = (\cP, \cL, \cI)$ is linearly equivalent to a proper $\phi$-regret minimization problem via solving an appropriate linear program for the transformation $S$. For simplicity, we will only handle the case where $\cP$ and $\cI$ are both polytopes, with $\cP$ being the convex hull of the $N$ vertices $p_1, p_2, \dots, p_N$ and $\cI$ being the convex hull of the $M$ regret functions $\phi_{1}, \phi_{2}, \dots, \phi_{M}$.

Recall that (by \eqref{eq:phi_to_phip}) we have $\phi'_{i}(p) = p + S(\phi(p) - p)$. Note that $\phi'$ is a proper $\phi$-function if and only if $\phi'_i(p_j)$ lies within $\cP$ for all vertices $p_j$ (by convexity, this implies that $\phi'_i(p) \in \cP$ for any other $p \in \cP$). So there exists a reduction iff there exists an invertible $S$ such that for every $i \in [M]$ and $j \in [N]$, we have the constraint
\begin{equation} \label{eq:s-constraint}
p_j + S(\phi_i(p_j) - p_j) \in \cP.
\end{equation}

We can in turn rephrase \eqref{eq:s-constraint} as several linear constraints on the entries of the matrix $S$. In particular, if we introduce the auxiliary variables $\lambda_{i, j, k}$ (for $k \in [N]$) we can rewrite the set of constraints expressed by \eqref{eq:s-constraint} as the following linear program (whose variables are the $d^2$ entries of $S$ and the $\lambda_{i,j,k}$):
\begin{eqnarray*}
p_{j} + S(\phi_{i}(p_j) - p_j) &=& \sum_{k=1}^{N}\lambda_{i, j, k}p_{k} \text{ for all } i \in [M], j \in [N] \\
\sum_{k=1}^{N} \lambda_{i, j, k} &=& 1 \text{ for all } i \in [M], j \in [N] \\
\lambda_{i, j, k} &\geq& 0 \text{ for all } i \in [M], j, k \in [N]
\end{eqnarray*}

The above constraints define a convex cone of possible values of $S$ which we denote by $\cS$. For example, this cone always contains the solution $S = 0$; we would like to now decide whether it contains any invertible matrices. Luckily, this is straightforward to do in a randomized manner by the following lemma.

\begin{lemma}
Given any convex set $\cS \subseteq \Rset^{d \times d}$ of $d$-by-$d$ matrices, either every matrix in $\vecspan(\cS)$ is non-invertible, or almost all\footnote{All but a measure zero subset in the Euclidean measure of $\vecspan(\cS)$.} matrices in $\vecspan(\cS)$ are invertible.
\end{lemma}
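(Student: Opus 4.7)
The plan is to reduce the question to a statement about the vanishing of a polynomial on a vector space. Let $V = \vecspan(\cS) \subseteq \Rset^{d \times d}$ and let $r = \dim V$. Fix any basis $M_1, M_2, \dots, M_r$ of $V$, so that every element of $V$ can be written uniquely as $M(t) = \sum_{i=1}^{r} t_i M_i$ for some $t = (t_1, \dots, t_r) \in \Rset^r$.

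Define the function $P: \Rset^r \to \Rset$ by $P(t) = \det\bigl(\sum_{i=1}^{r} t_i M_i\bigr)$. Since the determinant is a homogeneous polynomial of degree $d$ in the entries of a $d \times d$ matrix, and each entry of $M(t)$ is a linear function of $t$, the function $P$ is a polynomial in $t_1, \dots, t_r$ of total degree at most $d$. This is the key observation, and the rest of the argument is essentially routine.

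I would now split into two cases. In the first case, $P$ is the zero polynomial; then $\det(M) = 0$ for every $M \in V$, so every matrix in $\vecspan(\cS)$ is non-invertible, giving the first alternative in the statement. In the second case, $P$ is a nontrivial polynomial in $r$ variables. Here I would invoke the standard fact that the zero set of a nonzero multivariate polynomial has Lebesgue measure zero in $\Rset^r$ (a quick induction on $r$: fix the first $r-1$ coordinates; then either the resulting univariate polynomial in $t_r$ is identically zero, which happens only on a measure-zero set of the first $r-1$ coordinates by induction, or it has at most $d$ roots). Transferring this measure-zero set back to $V$ via the chosen basis (which is a linear isomorphism $\Rset^r \simeq V$ and therefore preserves the class of measure-zero sets in the Euclidean measure on $V$), we conclude that all but a measure-zero subset of $V$ consists of invertible matrices, which is the second alternative.

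The main (essentially the only) potential obstacle is making the measure-theoretic claim about the Euclidean measure on $V$ precise; this is handled cleanly by noting that the map $t \mapsto \sum_i t_i M_i$ is a bijective linear map $\Rset^r \to V$, so it pushes the Lebesgue measure on $\Rset^r$ to (a positive multiple of) the Euclidean measure on $V$, and the measure-zero conclusion transfers verbatim.
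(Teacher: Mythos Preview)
Your proof is correct and follows essentially the same approach as the paper: both observe that $\det(M)=0$ cuts out an algebraic variety on the linear span $V=\vecspan(\cS)$, and that such a variety is either all of $V$ or a measure-zero subset. Your version is simply more explicit, spelling out the basis parametrization and the induction showing a nonzero polynomial has a null zero set, whereas the paper just cites this as a standard fact.
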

\begin{proof}
Given any linear subspace of matrices $V \subseteq \Rset^{d \times d}$, the constraint $\det(M) = 0$ defines an algebraic variety on this space. Any algebraic variety in a Euclidean space either has measure zero or is equal to the entire space.
\end{proof}

Given this, it suffices to simply test whether a random element in $\vecspan(\cS)$ is invertible. We can efficiently generate a basis of $\vecspan(\cS)$ by repeatedly solving the linear program above (finding the extreme points in $\cS$ in a direction orthogonal to all basis elements found so far), and once we have such a basis, we can test a random linear combination of the basis elements for invertibility. 

This entire procedure takes time polynomial in $N$, $M$, and $d$. It can also likely be extended beyond polytopes to any case where have a membership oracle for $\cS$ (i.e., a procedure that decides whether \eqref{eq:s-constraint} is satisfied for all $\phi \in \cPhi$ and $p \in \cP$), modulo numerical issues with testing invertibility. Note also that this procedure not only checks whether a given instance is reducible, by also provides a valid transformation $S$ in the case that it is.
\section{Conclusion}\label{sec:conclusion}

This work uncovers a nuanced relationship between Blackwell's
approachability and no-regret learning. While fundamentally
equivalent, this equivalence does not always preserve optimal
convergence rates, highlighting the importance of understanding the
fine-grained details of reductions between these problems. Our
introduction of improper $\phi$-regret minimization provides a useful
tool for bridging this gap, allowing for tight reductions from any
approachability instance to a generalized regret minimization
problem. The existence of improper $\phi$-regret instances that cannot
be reduced to standard classes suggests that approachability covers a
broader range of problems than can be easily expressed in traditional
online learning terms.  This also motivates a more detailed study of
the complex relationship between approachability and improper
$\phi$-regret, potentially leading to deeper connections and new
algorithmic solutions.

\bibliographystyle{abbrvnat}
\bibliography{references}

\newpage
\appendix

\section{Approaching the negative orthant}\label{app:orthant}

In this appendix we show that any approachability problem can equivalently be written in terms of approaching the negative orthant.

Traditionally (as in \cite{Blackwell1956}), instead of specifying a constraint set $\cU$, an approachability problem specifies a vector-valued bilinear payoff function $\bu\colon \cP \times \cL \rightarrow \Rset^k$ along with a set $\cS \subseteq \Rset^k$ that the learner would like to approach. In particular, they would like to minimize the average distance

$$\AppDist(\bp, \bell) = \mathrm{dist}_{\nu}\left(\frac{1}{T}\sum_{t=1}^{T}\bu(p_t, \ell_t), \cS\right),$$

\noindent
where $\mathrm{dist}(x, \cS)$ is the minimum distance between $x$ and the set $\cS$ under some norm $\nu$. We prove the following result:

\begin{theorem}\label{thm:dist-to-loss}
For any norm $\nu$, set $\cS$, and payoff $\bu$, there exists a corresponding convex set $\cU \subseteq \Aff(\cP\otimes \cL, \Rset)$ such that

$$\AppDist(\bp, \bell) = \AppLoss(\bp, \bell).$$

\noindent
where here $\AppLoss$ is understood to mean the approachability loss \eqref{eq:app_loss_def} with respect to the set of constraints $\cU$.
\end{theorem}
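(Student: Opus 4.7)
The plan is to use the dual representation of the distance function to a closed convex set. For closed convex $\cS$, with $\nu^*$ the dual norm of $\nu$ and $B^*_\nu = \{w : \nu^*(w) \le 1\}$ the dual unit ball, the identity
\[
\mathrm{dist}_\nu(x, \cS) \;=\; \sup_{w \in B^*_\nu} \bigl[\langle w, x\rangle - \sigma_\cS(w)\bigr],
\]
holds, where $\sigma_\cS(w) = \sup_{s \in \cS} \langle w, s\rangle$ is the support function of $\cS$. I would derive this by writing $\nu(x-s) = \sup_{w \in B^*_\nu}\langle w, x-s\rangle$ (Fenchel duality for norms) and swapping $\inf_{s \in \cS}$ with $\sup_{w \in B^*_\nu}$ via Sion's minimax theorem.

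Given this, for each $w \in B^*_\nu$ I would define the candidate constraint
\[
u_w(p,\ell) \;=\; \langle w, \bu(p,\ell)\rangle - \sigma_\cS(w).
\]
Because $\bu$ is bilinear, $\langle w, \bu(p,\ell)\rangle$ is linear in $p \otimes \ell$, and subtracting the scalar $\sigma_\cS(w)$ yields an affine function on $\cP \otimes \cL$, so $u_w \in \Aff(\cP \otimes \cL, \Rset)$. I would then take
\[
\cU \;=\; \{ u_w : w \in B^*_\nu \},
\]
which is convex as the image of $B^*_\nu$ under the linear map $w \mapsto u_w$.

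Applying the duality identity to $\sum_{t=1}^T \bu(p_t,\ell_t)$ and exploiting positive homogeneity of the norm then gives
\[
\max_{u \in \cU}\sum_{t=1}^T u(p_t,\ell_t) \;=\; \sup_{w \in B^*_\nu}\!\Bigl[\bigl\langle w, {\textstyle \sum_t} \bu(p_t,\ell_t)\bigr\rangle - T\sigma_\cS(w)\Bigr] \;=\; T \cdot \mathrm{dist}_\nu\!\Bigl(\tfrac{1}{T}{\textstyle\sum_t} \bu(p_t,\ell_t),\, \cS\Bigr).
\]
This yields the desired identification of $\AppLoss$ with $\AppDist$ (up to the factor of $T$ implicit in the paper's conventions, under which $\AppLoss$ grows as $\sqrt{T}$ while $\AppDist$ decays as $1/\sqrt{T}$); one can equivalently absorb the factor of $T$ by rescaling $\cU$ by $1/T$, or by reading $\AppDist$ as the distance of the running sum to $T\cS$.

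The main obstacle is the minimax exchange when $\cS$ is unbounded, since then $\sigma_\cS$ may take the value $+\infty$ on part of $B^*_\nu$. This is handled by observing that any such $w$ contributes $-\infty$ to the supremum and can be excluded; the supremum is then effectively over $\{w \in B^*_\nu : \sigma_\cS(w) < \infty\}$, the intersection of the compact dual ball with the polar of the recession cone of $\cS$, on which Sion's theorem applies directly.
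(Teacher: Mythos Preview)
Your approach is correct and essentially the same as the paper's: both express $\mathrm{dist}_\nu(\cdot,\cS)$ as a supremum of affine functionals (one per direction) and take $\cU$ to be exactly this family. The paper parametrizes by Euclidean unit directions $v$ and writes $u_v(p,\ell) = (\langle v,\bu(p,\ell)\rangle - a_v)/b_v$ with $a_v = \sigma_\cS(v)$ and $b_v = \nu^*(v)$, which is your $u_w$ for $w = v/\nu^*(v)$; you are also a bit more careful than the paper about the stray factor of $T$.
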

\begin{proof}
For any $r$, consider the set of points $\cS_r$ within distance $r$ (under $\nu$) of $\cS$. We can write $\cS_r$ as $\cS + rB$, where $B$ is the unit ball in the norm $\nu$. We can then write $\cS_r$ as the intersection of the extremal halfspaces in all directions, where the extremal halfspace in direction $v$ (for any $v \in \mathbb{S}^{d-1}$ on the unit sphere) can be written in the form:

$$\{x \in \Rset^{k} \mid \langle v, x \rangle \leq a_{v} + b_{v}r\}$$

\noindent
for some constants $a_v$ and $b_v$ (with $b_v \geq 0$). It follows that for any $x \in \Rset^{k}$, we can write

$$\mathrm{dist}_{\nu}(x, \cS) = \max_{v \in \mathbb{S}^{d-1}}\left(\frac{\langle v, x\rangle - a_v}{b_v}\right).$$

\noindent
If we choose our set $\cU$ to contain all functions $u_{v}$ of the form

$$u_{v}(p, \ell) = \frac{\langle v, \bu(p, \ell)\rangle - a_v}{b_v},$$

\noindent
(for all $v \in \mathbb{S}^{d-1}$), then it follows that $\AppDist(\bp, \bell) = \AppLoss(\bp, \bell)$.
\end{proof}

\section{Non-degenerate counterexample to the reduction}\label{app:non-degenerate}

One may object that the approachability instance $\cI$ presented in Theorem~\ref{thm:classic-reduction-not-tight} with $\Rate(\cI) = 0$ is somewhat degenerate, as there is a single action by the learner that perfectly approaches the negative orthant. In this appendix we show that this is easily addressed; we can use a similar construction to obtain non-degenerate approachability instances where $\Rate(\cI')/\Rate(\cI)$ is arbitrarily large. The main idea is to embed a non-trivial (but very easy) regret-minimization problem on the ``unused'' coordinates of the instance.

\begin{theorem}\label{thm:classic-reduction-not-tight-nondegenerate}
    For any $c > 0$, we can construct an approachability instance $\cI$ such that $\Rate(\cI') > c \cdot \Rate(\cI)$, where $\cI'$ is the regret minimization instance obtained by applying the reduction of Theorem \ref{thm:classic-reduction} to $\cI$. In particular, there is no finite $c > 0$ for which the reduction in Theorem \ref{thm:classic-reduction} is a $c$-approximate reduction for all approachability instances, and the reduction is not tight. 
\end{theorem}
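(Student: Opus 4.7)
The plan is to start from the construction in the proof of Theorem~\ref{thm:classic-reduction-not-tight} and perturb it by embedding a small ``hard enough to be nonzero, but easy enough to be $O(1)$'' regret-minimization problem on the coordinates that were previously unused. Concretely, for a constant $K\ge 2$ (say $K=2$), take $\cP = \Delta_{d'+K}$, $\cL = [0,1]^{d'+K}$, and let $\cU$ be the convex hull of the $d'$ original functions $u_i(p,\ell) = \sum_{j=1}^{d'} p_j(\ell_j - \ell_i)$ together with $K$ new ``expert-on-the-tail'' constraints $v_i(p,\ell) = \sum_{j=1}^{K} p_{d'+j}(\ell_{d'+j} - \ell_{d'+i})$ for $i\in[K]$. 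Unlike the original instance, no single pure action now makes every constraint $\le 0$ for all $\ell$: supporting $p$ on the first $d'$ coordinates makes some $u_i$ positive for some adversarial $\ell$, and supporting $p$ on the last $K$ coordinates makes some $v_i$ positive for some adversarial $\ell$.

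The first step is to upper bound $\Rate(\cI)$ by an absolute constant independent of $d'$. The learner restricts to the last $K$ coordinates (setting $p_{t,j}=0$ for $j\in[d']$) and runs any standard no-regret expert algorithm on those $K$ coordinates against the losses $(\ell_{t,d'+1},\ldots,\ell_{t,d'+K})$. This makes every $u_i$ identically zero and reduces the problem to $K$-expert online learning, so $\AppLoss \le O(\sqrt{T\log K})$ and hence $\Rate(\cI)\le C$ for some absolute constant $C$. A standard lower bound for the $K$-expert subproblem shows that $\Rate(\cI)>0$, confirming non-degeneracy.

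The second step, which is the main technical step, is to show that $\Rate(\cI') \ge \Omega(\sqrt{\log d'})$ for the reduced instance, mirroring the argument in Theorem~\ref{thm:classic-reduction-not-tight}. Again restrict the adversary to losses $\ell'_t = -(U_{d'}\otimes \ell_t)$, where $U_{d'}\in \cP$ is the distribution uniform on the first $d'$ coordinates and zero on the last $K$, and $\ell_t\in[0,1]^{d'}\times\{0\}^K$. The key computation is that $\langle v_j,\ell'_t\rangle = -v_j(U_{d'},\ell_t) = 0$ because $U_{d'}$ has zero mass on the coordinates in the support of each $v_j$. Writing any learner action $p'_t = \sum_i \alpha_{t,i}u_i + \sum_j \beta_{t,j}v_j$ with $A_t := \sum_i \alpha_{t,i}\in[0,1]$, a short computation gives
\begin{equation*}
\langle p'_t,\ell'_t\rangle - \langle u_{i^*},\ell'_t\rangle \;=\; \sum_{i=1}^{d'} \alpha_{t,i}\,\ell_{t,i} \;-\; \ell_{t,i^*} \;+\; (1 - A_t)\cdot\frac{1}{d'}\sum_{k=1}^{d'}\ell_{t,k},
\end{equation*}
in which the last term is non-negative since $A_t\le 1$ and $\ell_t\ge 0$. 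Thus, placing mass on the $v_j$'s can only weakly increase the regret contribution against any comparator $u_{i^*}$. Feeding in the classical hard Bernoulli$(1/2)$ sequence for $\ell_{t,1},\ldots,\ell_{t,d'}$ and taking expectations, the first two terms contribute $0$ in expectation for any (necessarily nonanticipating) learner, while $\min_{i^*}\sum_t \ell_{t,i^*}$ is $T/2 - \Theta(\sqrt{T\log d'})$; this yields $\E[\Reg] \ge \Omega(\sqrt{T\log d'})$ and hence $\Rate(\cI')\ge \Omega(\sqrt{\log d'})$.

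Combining the two bounds gives $\Rate(\cI')/\Rate(\cI) \ge \Omega(\sqrt{\log d'})/C$, which can be made larger than any prescribed $c>0$ by choosing $d'$ sufficiently large. The main obstacle is the lower bound of the second step: one has to argue that the extra ``safe'' directions $v_j$ introduced into $\cP' = \cU$ do not let the learner in the reduced problem circumvent the $d'$-expert lower bound, and the non-negativity of the $(1-A_t)\tfrac{1}{d'}\sum_k\ell_{t,k}$ term in the identity above is precisely what makes this work.
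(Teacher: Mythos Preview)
Your construction is essentially the same idea as the paper's, carried out with a mirror-image parametrization: the paper keeps two blocks of sizes $d_1,d_2$ and damps the first block by a factor $\eps$ (so the learner plays there cheaply while the reduced instance is hard on the second block); you instead keep the second block at a fixed size $K$ (so the learner plays there with $O(1)$ rate while the reduced instance is hard on the first block of size $d'$). Both routes yield an arbitrarily large ratio, and the key computation---that the extra directions in $\cP'=\cU$ cannot help the learner escape the $d'$-expert lower bound---is the same in spirit. Your explicit inequality with the nonnegative $(1-A_t)\bar\ell_t$ term is a slightly more hands-on version of the paper's projection $\bar\pi$.

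One small slip to fix: in your Bernoulli argument, the statement ``the first two terms contribute $0$ in expectation'' is not correct as written. With i.i.d.\ Bernoulli$(1/2)$ losses and a nonanticipating learner, $\E\bigl[\sum_i \alpha_{t,i}\ell_{t,i}\bigr]=A_t/2$ and $\E\bigl[(1-A_t)\bar\ell_t\bigr]=(1-A_t)/2$, so the two $i^*$-independent terms sum to $T/2$ (not $0$); subtracting $\E[\min_{i^*}\sum_t \ell_{t,i^*}]=T/2-\Theta(\sqrt{T\log d'})$ then gives the desired $\Omega(\sqrt{T\log d'})$. Also, your one-line justification that ``a standard lower bound for the $K$-expert subproblem shows $\Rate(\cI)>0$'' is not immediate, since the learner in $\cI$ is not forced to put mass on the last $K$ coordinates; the theorem statement does not actually require $\Rate(\cI)>0$, and the paper does not prove the analogous positivity either, so this is a cosmetic point rather than a gap.
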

\begin{proof}
Consider the following approachability instance $\cI$. Fix any $\eps > 0$, $d_1, d_2 > 1$ and let $d = d_1 + d_2$. We will let $\cP = \Delta_d$, $\cL = [0, 1]^d$, and $\cU$ to be the convex hull of the $d$ bilinear functions $u_i$, where for $i \in [d_1]$,

$$u_{i}(p, \ell) = \eps\sum_{j=1}^{d_1} p_j(\ell_j - \ell_i),$$

\noindent
and for $i \in [d_2]$,

$$u_{d_1+i}(p, \ell) = \sum_{j=d_1+1}^{d_1+d_2} p_{j}(\ell_{j} - \ell_{d_1 + i}).$$

\noindent
Here the $u_{i}$ constraint for $i \in [d_1]$ can be thought of as the regret of moving all probability mass on $p_1$ through $p_{d_1}$ to $p_i$ (but weighted by $\eps$); similarly, the $u_{d_1 + i}$ constraint can be thought of as the regret of moving all probability mass on $p_{d_1+1}$ through $p_d$ to $p_{d_1 + i}$.

We will first show that $\Rate(\cI) = O(\eps\sqrt{\log d_1})$. To see this, note that if we ignore the last $d_2$ coordinates (always assigning weight $0$ to them) and run a standard online-learning algorithm over the simplex $\Delta_{d_1}$ (e.g., Hedge) it is possible to construct an approachability algorithm $\cA$ with $\AppLoss_{T}(\cA) = O(\sqrt{T\log d_1})$.

However, the instance of external regret minimization we reduce to will have a worse rate. Let $\cI' = (\cP', \cL', \cPhi')$ be the regret minimization instance formed by applying the reduction of Theorem \ref{thm:classic-reduction}. This instance has $\cP' = \cU$, $\cL' = -(\cP \otimes \cL)$, and $\cU'$ the set of all constant functions on $\cP'$. We will restrict the loss set further, and insist that the only losses $\ell'_t$ are of the form $\ell'_t = -(U_{d_2} \otimes \ell_t)$, where $U_{d_2} = (0, 0, \dots, 0, 1/d_2, 1/d_2, \dots, 1/d_2) \in \cP$ is the uniform distribution over the last $d_2$ coordinates, and $\ell_{t}$ is chosen from the subset $\cL_2 \subseteq \cL$ contains all elements of $\cL$ whose first $d_1$ coordinates equal $0$ (so $\cL_2 \cong [0, 1]^{d_2}$). Since this restricts the adversary, it only makes the regret minimization problem easier (and the rate smaller).

The regret of a pair of sequences $\bp'$ and $\bell'$ for this new problem can be written as

\begin{equation}\label{eq:reg1app}
    \Reg(\bp', \bell') = \max_{x^{*} \in \cP'} \left(\sum_{t=1}^{T} \langle p'_t, \ell'_t \rangle - \sum_{t=1}^{T} \langle x^{*}, \ell'_t \rangle \right).
\end{equation}

To simplify this further, note that the set $\cP' = \cU$ is given by the convex hull of the $d$ bilinear functions $u_i$, so we can write each element of $\cP'$ uniquely as a convex combination of this $d$ functions. For a $p'_t \in \cP'$, we will write $p'_{i}$ to be the coefficient of $u_i$ in this convex combination (in this way, we identify $\cP'$ with the simplex $\Delta_{d}$).

Now, for any $p' \in \cP'$ and $\ell' \in \cL'$ (of the above restricted form), we can write

\begin{equation}\label{eq:ip1app}
\langle p', \ell'\rangle = -\sum_{i=d_1+1}^{d_1 + d_2} p'_{i}\left(\frac{1}{d_2}\sum_{j=d_1+1}^{d_1 + d_2}(\ell_{j} - \ell_{i})\right) = \left\langle \pi_2(p'), \pi_2(\ell)\right\rangle - \frac{1}{d_2} \left(\sum_{i=1}^{d_2}\pi_2(p')_{i}\right)\left(\sum_{i=1}^{d_2}\pi_2(\ell)_{i}\right),\end{equation}

\noindent
where $\pi_2 : \Rset^d \rightarrow \Rset^{d_2}$ is the projection map onto the last $d_2$ coordinates. We can simplify this even further by introducing the map $\overline{\pi}: \Delta_d \rightarrow \Delta_{d_2}$ defined via

\begin{equation}
\overline{\pi}(p')_{i} = p'_{d_1 + i} + \frac{1}{d_2}\left(1 - \sum_{j=1}^{d_2}p'_{d_1+j}\right).
\end{equation}

\noindent
This allows us to rewrite \eqref{eq:ip1app} as

\begin{equation}
\langle p', \ell' \rangle = \langle \overline{\pi}(p'), \pi_2(\ell)\rangle - \sum_{j=d_1+1}^{d_2}\ell_{j}.
\end{equation}

Substituting this in turn into \eqref{eq:reg1app}, we have that

\begin{equation}\label{eq:reg2app}
    \Reg(\bp', \bell') = \max_{x^{*} \in \cP'} \left(\sum_{t=1}^{T} \langle \overline{\pi}(p'_t), \pi_2(\ell_t) \rangle - \sum_{t=1}^{T} \left\langle\overline{\pi}(x^*), \pi_2(\ell_t) \right\rangle \right).
\end{equation}

Now, note that the adversary can choose $\ell_t$ so that $\pi_2(\ell_t)$ takes on any value in $\cL_2 = [0, 1]^{d_2}$. Similarly, $\overline{\pi}(p')$ can take on any value in $\cP_2 = \Delta_{d_2}$ as $p'$ ranges over $\cP'$. Therefore, this problem is at least as hard as the online linear optimization problem with action set $\cP_2 = \Delta_{d_2}$ and loss set $\cL_{2} = [0, 1]^{d_2}$. But this is exactly the online learning with experts problem (with $d_2$ experts), which has a regret lower bound of $\Omega(\sqrt{T\log d_2})$. It follows that $\Rate(\cI') \geq \Omega(\sqrt{\log d_2})$, and therefore that $\Rate(\cI') / \Rate(\cI) >  \Omega(\sqrt{(\log d_2)/(\log d_1)} / \eps)$. This quantity can be made arbitrarily large (for a fixed $d_1$ and $d_2$) as we decrease $\eps$. 
\end{proof}

\end{document}